\newtheorem{OP}{{\sf OP}}
\newtheorem{defi}{Definition}
\newtheorem{theo}{Theorem}
\newtheorem{coro}[theo]{Corollary}
\newtheorem{lemm}{Lemma}
\newcommand{\modiff}[1]{\textcolor{black}{#1}} 
\newcommand{\rmmodif}[1]{} 
\algnewcommand{\Inputs}[1]{%
  \State \textbf{Inputs:}
  \Statex \hspace*{\algorithmicindent}\parbox[t]{.8\linewidth}{\raggedright #1}
}
\algnewcommand{\Initialize}[1]{%
  \State \textbf{Initialize:}
  \Statex \hspace*{\algorithmicindent}\parbox[t]{.8\linewidth}{\raggedright #1}
}
\def\indot<#1>{\langle #1 \rangle}
\begin{document}

\title{Theory and Algorithms for Shapelet-based Multiple-Instance Learning \thanks{This is the preprint version of a paper published in Neural Computation.}}





\author{
 Daiki Suehiro\\
 \email{suehiro@ait.kyushu-u.ac.jp} \\
 \addr 
 Kyushu University\\
and AIP, RIKEN\\
\AND
 Kohei Hatano\\
 \email{hatano@inf.kyushu-u.ac.jp}\\
 \addr 
 Kyushu University\\
and AIP, RIKEN\\
\AND
 Eiji Takimoto\\
 \email{eiji@inf.kyushu-u.ac.jp}\\
 \addr 
 Kyushu University\\
\AND
 Shuji Yamamoto\\
 \email{yamashu@math.keio.ac.jp}\\
 \addr 
 Keio University\\
and AIP, RIKEN\\
\AND
 Kenichi Bannai\\
 \email{bannai@math.keio.ac.jp}\\
 \addr 
 Keio University\\
and AIP, RIKEN\\
\AND
 Akiko Takeda\\
 \email{takeda@mist.i.u-tokyo.ac.jp}\\
 \addr 
 The University of Tokyo\\
and AIP, RIKEN\\
}


 \maketitle




\begin{abstract}
We propose a new formulation of  Multiple-Instance Learning (MIL),
in which a unit of data consists of a set of instances
  called a bag. The goal is to find a good classifier of bags based on
  the similarity with a ``shapelet'' (or pattern), where the similarity
  of a bag with a shapelet is the maximum similarity of
  instances in the bag.
In previous work, some of the training instances are chosen
as shapelets with no theoretical justification.
In our formulation, we use all possible, and thus infinitely many
shapelets, resulting in a richer class of classifiers.
We show that the formulation is tractable, 
that is, it can be reduced through Linear Programming Boosting (LPBoost) to
Difference of Convex (DC)
programs of finite (actually polynomial) size.
\modiff{
Our theoretical result also gives justification to the heuristics
of some of the previous work.
The time complexity of the proposed algorithm highly
depends on the size of the set of all instances in the training sample.
To apply to the data containing a large number of instances,
we also propose a heuristic option of the algorithm
without the loss of the theoretical guarantee.
Our empirical study demonstrates that 
our algorithm uniformly works for Shapelet Learning tasks on
time-series classification 
and various MIL tasks
with comparable accuracy to the existing methods.
Moreover, we show that the proposed heuristics allow us to achieve
the result with reasonable computational time.
}
\end{abstract}

\section{Introduction}
Multiple-Instance Learning (MIL) is a fundamental framework of
supervised learning with a wide range of applications such as
prediction of molecular activity, and image classification.
MIL has been extensively studied both in theoretical and practical
aspects~\citep{Gartner02multi-instancekernels,NIPS2002misvm,Sabato:2012:MLA,pmlr-v28-zhang13a,Doran:2014,CARBONNEAU2018329},
since the notion of MIL was first proposed by~\citet{Dietterich:1997}.

A standard MIL setting is described as follows:
A learner receives sets $B_1, B_2, \ldots, B_m$
called bags; each contains multiple instances. 
In the training phase, each bag is labeled
but instances are not labeled individually.
The goal of the learner is to obtain a hypothesis 
that predicts the labels of 
unseen bags correctly\footnote{Although there are settings where
instance label prediction is also considered, 
we focus only on bag-label prediction in this paper.}.
One of the most common hypotheses used in practice has the following form:
\begin{align}
\label{align:single-shape}
h_{\bf u}(B) = \max_{x \in B} \left\langle {\bf u}, \Phi(x)\right\rangle,
\end{align}
where $\Phi$ is a feature map and 
${\bf u}$ is a feature vector which we call a \emph{shapelet}.
In many applications, ${\bf u}$ is
interpreted as a particular ``pattern'' in the feature space
and the inner product
as the similarity of $\Phi(x)$ from ${\bf u}$.
Note that we use the term ``shapelets'' by following the terminology
of Shapelet Learning (SL), which is a framework for time-series classification, 
although it is often called ``concepts'' in the literature of MIL.
Intuitively, this hypothesis evaluates a given bag by the
maximum similarity between the instances in the bag and the shapelet ${\bf u}$.
Multiple-Instance Support Vector Machine (MI-SVM) proposed by~\citet{NIPS2002misvm} 
is a widely used algorithm that
employs this hypothesis class and 
learns ${\bf u}$.
It is well-known that MIL algorithms using this hypothesis class
perform empirically better in various multiple-instance datasets.
Moreover, a generalization error bound of the hypothesis class 
is given by~\citet{Sabato:2012:MLA}.

However, in some domains such as 
image recognition and document classification,
it is said that the hypothesis class (\ref{align:single-shape}) is
not effective \citep[see, e.g.,][]{MI1normSVM}.
To employ MIL on such domains more effectively,
\modiff{
\citet{MI1normSVM} extend a hypothesis to
a convex combination of $h_{\bf u}$ :
\begin{align}
\label{align:our-hypo}
 	 g(B) = \sum_{{\bf u} \in U} w_{\bf u} \max_{x \in B} \left\langle {\bf u},
  \Phi(x)\right\rangle, 
\end{align}
for some set $U$ of shapelets. In particular, Chen et al. consider
$U_\mathrm{train}=\{\Phi(z) \mid z \in \bigcup_{i=1}^{m} B_i\}$, which is
constructed from all instances in the training sample. 
}
The authors demonstrate that
this hypothesis with the Gaussian kernel performs well
in image recognition.
\modiff{
The generalization bound provided by \citet{Sabato:2012:MLA} is
applicable to a hypothesis class of the form~(\ref{align:our-hypo}) for the set
$U$ of infinitely many shapelets ${\bf u}$ with bounded norm.
Therefore, the generalization bound also holds for $U_\mathrm{train}$.
However, it has never been theoretically discussed why
such a fixed set $U_\mathrm{train}$ using training instances effectively works in MIL tasks.
}
\subsection{Our Contributions}
\label{subsec:contributions}
 In this paper, 
we propose an MIL formulation with the hypothesis class
(\ref{align:our-hypo}) for sets $U$ of infinitely many shapelets.

The proposed learning framework is 
theoretically motivated and practically effective. 
We show the generalization error bound 
based on the Rademacher complexity~\citep{Bartlett:2003:RGC} 
and large margin theory.
The result indicates that we can achieve a small generalization
error by keeping 
a 
large margin for large training
sample. 

The learning framework
can be applied to various kinds of data and tasks
because of our unified formulation.
The existing shapelet-based methods are formulated for their
target domains. 
More precisely, the existing shapelet-based methods
are formulated using a fixed similarity measure (or distance),
and the generalization ability is
shown empirically in their target domains.
For example, \citet{MI1normSVM} and \citet{sangnier16} 
calculated the feature vectors based on 
the similarity between every instance using the Gaussian kernel.
In time-series domain, shapelet-based methods
\citep{Ye:2009:TSS:1557019.1557122,KeoghR13,Hills:2014:CTS:2597434.2597448} 
usually use Euclidean distance as a similarity measure (or distance).
By contrast, our framework employs a kernel function as a
similarity measure.
Therefore, 
our learning framework can be uniformly applied
if we can set a kernel function as 
a 
similarity measure
according to a target learning task.
For example, the Gaussian kernel (behaves like the Euclidean distance) 
and Dynamic Time Warping (DTW) kernel~\citep{Shimodaira:2001}. 
Our framework can be also applied
to non-real-valued sequence data 
(e.g., text, and a discrete signal) using a string kernel.
Moreover, our generalization performance
is guaranteed theoretically.
The experimental results demonstrate that 
the provided approach uniformly works for SL and MIL tasks
without introducing 
domain-specific  
parameters and heuristics, and
compares with the state-of-the-art shapelet-based methods.

We show that the formulation is tractable.
The algorithm is 
based on Linear Programming Boosting
\citep[LPBoost,][]{demiriz-etal:ml02} that solves the soft margin optimization problem
via a column generation approach.
Although
the weak learning problem in the boosting becomes 
an optimization 
problem over an infinite-dimensional space,
we can show that an analog of the representer theorem holds on it 
and allows us to reduce it to a non-convex optimization problem
(difference of convex program)
over a finite-dimensional space.
While it is difficult to solve the sub-problems exactly because of
non-convexity, it is possible to find
good approximate solutions 
with reasonable time
in many practical cases \citep[see, e.g.,][]{LeThi2018}.

Remarkably,
our theoretical result gives justification to the heuristics of
choosing the shapelets in the training instances.
\modiff{
Our representer theorem indicates that 
at $t$-th iteration of boosting,
the optimal solution ${\bf u}_t$ (i.e., shapelet) of the weak learning problem 
can be written as a linear combination of 
the feature maps of training instances, that is, ${\bf u}_t=\sum_{z \in \bigcup_{i=1}^m B_i} \alpha_{t,z} \Phi(z)$.
Thus, we obtain a final classifier of the following form
\[
g(B) = \sum_{t=1}^Tw_t\max_{x \in B}  \langle {\bf u}_t, \Phi(x) \rangle = 
\sum_{t=1}^Tw_t\max_{x \in B} \sum_{z \in \bigcup_{i=1}^m B_i} \alpha_{t,z} \langle \Phi(z),
\Phi(x) \rangle.
\]
Note that the hypothesis class used in the 
standard approach~\citep{MI1normSVM,sangnier16}
corresponds to the special case 
where ${\bf u}_t \in U_\mathrm{train}=\{\Phi(z) \mid z \in \bigcup_{i=1}^{m} B_i\}$. 
This observation would suggest that the standard approach
of using $U_\mathrm{train}$ is reasonable.
}

\subsection{Comparison to Related Work for MIL}
\label{subsec:related}
There are many MIL algorithms 
with hypothesis classes which are different from
(\ref{align:single-shape}) or (\ref{align:our-hypo}). 
\citep[e.g.,][]{mi_adaboost,Gartner02multi-instancekernels,NIPS2003DPBOOST,NIPS2005_2926,MI1normSVM}.
For example, these algorithms adopted diverse approaches for the bag-labeling hypothesis
from shapelet-based hypothesis classes 
(e.g., \citet{NIPS2005_2926} used a Noisy-OR based hypothesis and 
\citet{Gartner02multi-instancekernels} proposed a new kernel
called a set kernel).
Shapelet-based hypothesis classes have a practical advantage 
of being applicable to SL in 
the 
time-series domain 
(see next subsection).

\citet{Sabato:2012:MLA} proved generalization bounds of hypotheses
classes for MIL including those of (\ref{align:single-shape}) and 
(\ref{align:our-hypo}) with infinitely large
sets $U$.
\modiff{
The generalization bound we provided in this paper is incomparable to the bound provided by Sabato and Tishby. 
When some data-dependent parameter is regarded as a constant,
our bound is slightly better in terms of the sample size $m$ by the factor of $O(\log m)$. 
}
They also proved the PAC-learnability of the class
(\ref{align:single-shape}) using the boosting approach under some
technical assumptions.
Their boosting approach is different from our work in that they assume
that labels are consistent with some hypothesis of the
form~(\ref{align:single-shape}), while we consider arbitrary distributions
over bags and labels.
\subsection{Connection between MIL and Shapelet Learning for Time Series
  Classification}
\label{subsec:connection}
Here we mention briefly that MIL with
type (\ref{align:our-hypo}) hypotheses
is closely related to SL, a framework
for time-series classification that has been extensively
studied~\citep{Ye:2009:TSS:1557019.1557122,KeoghR13,Hills:2014:CTS:2597434.2597448,Grabocka:2014:LTS:2623330.2623613}
in parallel to MIL.
SL is a notion of learning with a feature extraction\rmmodif{(remove a
  comma here)} method, defined by
a finite set $M \subseteq \mathbb{R}^\ell$ of
real-valued ``short'' sequences called shapelets.
A similarity measure is given by (not necessarily a Mercer kernel)
$K: \mathbb{R}^\ell \times \mathbb{R}^\ell \to \mathbb{R}$
in the following way.
A time series ${\boldsymbol{\tau}} = (\tau[1], \dots, \tau[L]) \in \mathbb{R}^L$
can be identified with a bag
$B_{\boldsymbol{\tau}} = \{(\tau[j], \ldots, \tau[j+\ell-1])
\mid 1 \leq j \leq L-\ell+1\}$ consisting of all subsequences
of ${\boldsymbol{\tau}}$ of length $\ell$.
The feature of ${\boldsymbol{\tau}}$ is a vector
	$\left(\max_{{\bf x} \in B_{\boldsymbol{\tau}}} K({\bf z}, {\bf x}) \right)_{{\bf z} \in M}$
of a fixed dimension $|M|$ regardless of the length
$L$ of the time series ${\boldsymbol{\tau}}$.
When we employ a linear classifier on top of the features, we obtain
a hypothesis in the form:
\begin{align}
\label{align:multi-shape-ts}
	g({\boldsymbol{\tau}}) = \sum_{{\bf z} \in M} w_{\bf z} \max_{{\bf x} \in B_\tau} K({\bf z},{\bf x}),
\end{align}
which is essentially the same form as~(\ref{align:our-hypo}),
except that
finding 
good shapelets $M$ is a part of the learning task,
as well as to 
find 
a good weight vector ${\bf w}$.
This task is one of the most successful 
approaches  
for SL
\citep{Hills:2014:CTS:2597434.2597448,Grabocka:2014:LTS:2623330.2623613,GrabockaWS15,renard:hal-01217435,HouKZ16}, where
a typical choice of $K$ is $K({\bf z},{\bf x}) = -\|{\bf z} - {\bf x}\|_2$.
However, almost all existing methods heuristically choose
shapelets $M$ and with no theoretical guarantee on how good
the choice of $M$ is.

Note also that in the SL framework, each ${\bf z} \in M$ is called a
shapelet, while in this paper, we assume that $K$ is a kernel
$K(z,x) = \langle \Phi(z), \Phi(x) \rangle$ and
any ${\bf u}$ (not necessarily $\Phi(z)$ for some $z$)
in the Hilbert space is called a shapelet.

\citet{sangnier16} proposed an MIL-based anomaly
detection algorithm for time series data. 
They showed an algorithm based on LPBoost and the generalization error
bound based on the Rademacher complexity~\citep{Bartlett:2003:RGC}.
Their hypothesis class is same as~\citep{MI1normSVM}.
However, they did not 
analyze 
the theoretical justification to use
finite set $U$ made from training instances 
(the authors mentioned as
future work).
By contrast, we consider a hypothesis class based on 
infinitely many shapelets, and
our representer theorem guarantees that our learning problem
over the infinitely large set is still tractable.
As a result, our study justifies the previous 
heuristics of their approach.

There is another work which treats shapelets
not appearing in the training set.
Learning Time-Series Shapelets (LTS)
algorithm~\citep{Grabocka:2014:LTS:2623330.2623613} 
tries to solve a non-convex optimization problem of learning 
effective shapelets in an infinitely large domain. 
However, there is no theoretical guarantee of its generalization
error.
In fact, our generalization error bound applies 
to 
their hypothesis class. \par
For SL tasks, many researchers focus on improving efficiency~\citep{KeoghR13,renard:hal-01217435,GrabockaWS15,WistubaGS15,HouKZ16,Karlsson:2016}.
However, these methods are specialized 
in the 
time-series domain, 
and the generalization performance has never been theoretically discussed.

Curiously, despite MIL and SL share similar motivations and
hypotheses, the relationship between MIL and SL has not yet been 
pointed out.
From the shapelet-perspective in MIL, the hypothesis (\ref{align:single-shape})
is regarded as a ``single shapelet''-based hypothesis, and
the hypothesis (\ref{align:our-hypo}) is regarded as 
a 
``multiple-shapelets''-based hypothesis.
In this study, we refer to a linear combination of maximum similarities based on
shapelets such as (\ref{align:our-hypo}) and
(\ref{align:multi-shape-ts})
as {\it shapelet-based classifiers}.

\section{Preliminaries}
\label{sec:prelim}
Let ${\mathcal{X}}$ be an instance space.
A bag $B$ is a finite set of instances chosen from ${\mathcal{X}}$. 
The learner receives a sequence of labeled bags
$S = ((B_1, y_1), \ldots, (B_m, y_m)) \in
(2^{{\mathcal{X}}} \times \{-1, 1\})^m$ called a sample, where 
each labeled bag is independently drawn according to some unknown
distribution $D$ over $2^{{\mathcal{X}}} \times \{-1, 1\}$.
Let ${P_S}$ denote the
set of all instances that appear in the sample $S$.
That is, ${P_S} = \bigcup_{i=1}^m B_i$.
Let $K$ be a kernel over ${\mathcal{X}}$, which is used to measure the
similarity between instances, and let
$\Phi: {\mathcal{X}} \to {\mathbb{H}}$ denote a feature map associated with
the kernel $K$ for a Hilbert space ${\mathbb{H}}$,
that is,
$K(z, z')=\langle \Phi(z), \Phi(z')\rangle$
for instances $z, z' \in {\mathcal{X}}$, where
$\langle \cdot,\cdot \rangle$ denotes the inner product over ${\mathbb{H}}$.
The norm induced by the inner product is denoted by
$\|\cdot\|_{\mathbb{H}}$ defined as $\|{\bf u}\|_{{\mathbb{H}}} =
\sqrt{\langle {\bf u}, {\bf u} \rangle}$ for ${\bf u} \in {\mathbb{H}}$. 

For each ${\bf u} \in {\mathbb{H}}$ which we call a shapelet,
we define 
a {\it shapelet-based classifier}
denoted by $h_{{\bf u}}$, 
as the function that maps a given bag
$B$ to the maximum of
the similarity scores between shapelet ${\bf u}$ and $\Phi(x)$
over all instances $x$ in $B$.
More specifically,
\[
	h_{{\bf u}}(B) = \max_{x \in B}
		\left\langle{\bf u}, \Phi(x) \right\rangle.
\]
For a set $U \subseteq {\mathbb{H}}$, we define the class of shapelet-based classifiers
as 
\[
	H_U = \left\{ h_{{\bf u}} \mid {\bf u} \in U \right\}
\]
and let ${\mathrm{conv}}(H_U)$ denote the set of convex combinations of
shapelet-based classifiers in $H_U$. More precisely,
\begin{align} \nonumber
\label{align:final_hypo}
	 {\mathrm{conv}}(H_U) = 
       &\left\{
		\int_{{\bf u} \in U} w_{\bf u} h_{\bf u} d{\bf u} \mid
		\text{$w_{\bf u}$ is a density over $U$}
          \right\} \\ \nonumber
       =&\left\{
		\sum_{{\bf u} \in U'} w_{\bf u} h_{\bf u} \mid
		\forall {\bf u} \in U', w_{\bf u} \geq 0, \right. \\
		&\left. \sum_{{\bf u} \in U'} w_{\bf u} = 1,
		\text{$U' \subseteq U$ is a finite support} 
 	\right\}.
\end{align}
The goal of the learner is to find a hypothesis
$g \in {\mathrm{conv}}(H_U)$, so that its generalization error
${\mathcal{E}}_D(g) = \Pr_{(B,y) \sim D}[{\mathrm{sign}}(g(B)) \neq y]$
is small.
Note that since the final hypothesis ${\mathrm{sign}} \circ g$
is invariant to any scaling of $g$, we assume
without loss of generality that
\[
	U = \{{\bf u} \in {\mathbb{H}} \mid \|{\bf u}\|_{\mathbb{H}} \leq 1\}.
\]
Let
${\mathcal{E}}_{\rho}(g)$ denote the \textit{empirical margin loss} of $g$
over $S$,
that is, ${\mathcal{E}}_{\rho}(g) = |\{i \mid y_i g(B_i) < \rho\}|/m$.

\section{Optimization Problem Formulation}
\label{sec:opt}
In this paper, 
we formulate the problem as
soft margin maximization with $1$-norm regularization,
which ensures a generalization bound
for the final hypothesis \citep[see, e.g.,][]{demiriz-etal:ml02}.
Specifically, the problem is formulated as a
linear programming problem (over infinitely many variables) as
follows:
\begin{align}\label{align:LPBoostPrimal}
\max\limits_{\rho,w,{\bf \xi}} \; &
 \rho  - \frac{1}{\nu m}\sum_{i=1}^m \xi_{i}
\\ \nonumber
 \text{sub.to} \; 
& \int_{{\bf u} \in U}y_i w_{{\bf u}}h_{\bf u}(B_i) 
d{\bf u} 
                       \geq \rho -\xi_{i} \wedge \xi_{i} \geq0, ~ i \in [m],
\\ \nonumber
&\int_{{\bf u} \in U} w_{{\bf u}}d{\bf u}  = 1,
 w_{\bf u}  \geq 0,~ \rho \in \mathbb{R},
\end{align}
where $\nu \in [0,1]$ is a parameter.
To avoid the integral over the Hilbert space,
it is convenient to consider the dual form:
\begin{align}\label{align:LPBoostDual}
\min\limits_{\gamma,{\bf d}} \; & \gamma
\\ \nonumber
\text{sub.to} \; 
& \sum_{i=1}^my_id_i h_{\bf u}(B_i) \leq \gamma,
\; {\bf u} \in U,
\\ \nonumber
& 0 \leq d_i \leq 1/(\nu m),\; i \in [m],
\\ \nonumber
& \sum_{i=1}^m d_{i}  = 1, ~ \gamma \in \mathbb{R}.
\end{align}
The dual problem is categorized as a semi-infinite program because
it contains infinitely many constraints.
Note that the duality gap is zero because the problem
(\ref{align:LPBoostDual})
is linear and the optimum is finite \citep[see Theorem 2.2 of][]{Shapiro09}.
We employ column generation to solve the dual problem:
solve (\ref{align:LPBoostDual}) for a finite subset $U' \subseteq U$,
find ${\bf u}$ to which the corresponding constraint is maximally violated
by the current solution (\emph{column generation part}),
and repeat the procedure with $U' = U' \cup \{{\bf u}\}$ 
until a certain stopping criterion is met.
In particular, we use
LPBoost~\citep{demiriz-etal:ml02}, a well-known and practically
fast algorithm of column generation.
Since the solution ${\bf w}$ is expected to be sparse
due to the 1-norm regularization,
the number of iterations is expected to be small.

Following the boosting terminology, we refer to the
column generation part as weak learning.
In our case, weak learning is formulated following
the optimization problem:
\begin{align}\label{align:WeakLearn_u}
\max_{{\bf u} \in {\mathbb{H}}} \;
\sum_{i=1}^my_id_i
\max_{x \in B_i}  \left\langle {\bf u},  \Phi\left(x \right)
                       \right\rangle \; \text{sub.to} \; \|{\bf u}\|_{\mathbb{H}}^2 \leq 1.
\end{align}
Thus, we need to design a weak learner for solving
(\ref{align:WeakLearn_u}) for a given sample weighted by ${\bf d}$.
However, it seems to be impossible to solve it directly because
we only have access to $U$ through the associated kernel.
Fortunately, we prove a version of representer theorem given below,
which makes (\ref{align:WeakLearn_u}) tractable.
\begin{theo}[Representer Theorem]\label{theo:represent}
The solution ${\bf u}^*$ of (\ref{align:WeakLearn_u}) can be
written as
${\bf u}^* = \sum_{z \in {P_S}} \alpha_z \Phi(z)$
for some real numbers $\alpha_z$.
\end{theo}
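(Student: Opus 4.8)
The plan is to prove this by an orthogonal projection argument, the mechanism underlying all representer theorems. Let $V = \mathrm{span}\{\Phi(z) \mid z \in {P_S}\} \subseteq {\mathbb{H}}$; since ${P_S}$ is finite, $V$ is a finite-dimensional, hence closed, subspace, so every ${\bf u} \in {\mathbb{H}}$ decomposes uniquely as ${\bf u} = {\bf u}_\parallel + {\bf u}_\perp$ with ${\bf u}_\parallel \in V$ the orthogonal projection of ${\bf u}$ onto $V$ and ${\bf u}_\perp \in V^\perp$. My goal is to show that projecting any feasible point onto $V$ neither changes the objective of (\ref{align:WeakLearn_u}) nor violates its constraint, so that a maximizer may be taken inside $V$.

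First I would show that the objective depends only on ${\bf u}_\parallel$. For every bag $B_i$ and every instance $x \in B_i$ we have $x \in {P_S}$, hence $\Phi(x) \in V$ and $\langle {\bf u}_\perp, \Phi(x)\rangle = 0$. Therefore
\[
\langle {\bf u}, \Phi(x)\rangle
= \langle {\bf u}_\parallel, \Phi(x)\rangle + \langle {\bf u}_\perp, \Phi(x)\rangle
= \langle {\bf u}_\parallel, \Phi(x)\rangle
\]
for all such $x$. Since this equality holds termwise, each inner maximum $\max_{x \in B_i}\langle {\bf u}, \Phi(x)\rangle$, and hence the whole weighted sum $\sum_{i=1}^m y_i d_i \max_{x \in B_i}\langle {\bf u}, \Phi(x)\rangle$, is unchanged when ${\bf u}$ is replaced by ${\bf u}_\parallel$.

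Next I would verify that the constraint is preserved. By the Pythagorean identity, $\|{\bf u}\|_{\mathbb{H}}^2 = \|{\bf u}_\parallel\|_{\mathbb{H}}^2 + \|{\bf u}_\perp\|_{\mathbb{H}}^2 \geq \|{\bf u}_\parallel\|_{\mathbb{H}}^2$, so any ${\bf u}$ with $\|{\bf u}\|_{\mathbb{H}}^2 \leq 1$ yields $\|{\bf u}_\parallel\|_{\mathbb{H}}^2 \leq 1$; that is, ${\bf u}_\parallel$ remains feasible. Combining the two observations, for every feasible ${\bf u}$ its projection ${\bf u}_\parallel \in V$ is feasible and attains the same objective value. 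Consequently the supremum of (\ref{align:WeakLearn_u}) over the unit ball of ${\mathbb{H}}$ equals its supremum over the unit ball of $V$. Since $V$ is finite-dimensional, this restricted feasible region is compact and the objective is continuous, so the supremum is attained by some ${\bf u}^* \in V$. By definition of $V$, such a maximizer can be written as ${\bf u}^* = \sum_{z \in {P_S}}\alpha_z \Phi(z)$ for suitable real coefficients $\alpha_z$, which is exactly the claim.

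The step that most needs care is the existence of the maximizer: in an infinite-dimensional Hilbert space the closed unit ball is not compact, so attainment is not automatic for the original problem (\ref{align:WeakLearn_u}). The projection argument resolves this precisely, by reducing the effective domain to the finite-dimensional subspace $V$ where compactness restores attainment; the remaining verifications (termwise invariance of the inner products and the norm inequality) are routine consequences of the fact that $\Phi(x) \in V$ for every instance $x$ appearing in the sample.
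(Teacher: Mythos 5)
Your proof is correct, and it rests on the same core mechanism as the paper's --- orthogonal projection onto $V=\mathrm{span}\{\Phi(z)\mid z\in P_S\}$, with the Pythagorean identity handling the norm constraint and the invariance of the inner products handling the objective --- but you reach the conclusion by a more direct route. The paper first partitions the feasible set according to the argmax mapping $\theta_{\bf u}\colon B_i\mapsto\arg\max_{x\in B_i}\langle{\bf u},\Phi(x)\rangle$, obtaining finitely many convex sub-problems with linear objectives to which the textbook representer theorem applies off the shelf, and then takes the best sub-problem solution; this decomposition is what the paper presents as the ``non-trivial'' step, and the mappings $\theta$ it introduces are reused later in the Gaussian-complexity analysis behind the generalization bound. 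You instead apply the projection once to the original non-convex problem, observing that replacing ${\bf u}$ by ${\bf u}_\parallel$ preserves every value $\langle{\bf u},\Phi(x)\rangle$ for $x\in P_S$ simultaneously and therefore preserves each $\max_{x\in B_i}$ termwise; this shows the decomposition is not actually needed for the representer theorem itself. Your version is shorter, and it has the added merit of explicitly addressing attainment of the supremum via compactness of the unit ball of the finite-dimensional subspace $V$, a point the paper leaves implicit. What the paper's longer route buys is the reduction to the standard convex setting it cites and the $\theta$-machinery it wants on hand anyway; what yours buys is economy and a cleaner statement of why the max-of-inner-products objective causes no difficulty.
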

Our theorem can be derived from a non-trivial application 
of the standard representer theorem \citep[see, e.g.,][]{Mohri.et.al_FML}.
Intuitively, we prove the theorem by decomposing the optimization problem
(\ref{align:WeakLearn_u})
into a number of sub-problems, so that the standard representer theorem
can be applied to each of the sub-problems.
The detail of the proof is given in 
Appendix~\ref{sec:proof1}.
\par
\modiff{
This result gives justification 
to the simple heuristics in the standard approach: choosing the shapelets
based on the training instances.
More precisely, the hypothesis class used in the 
standard approach~\citep{MI1normSVM,sangnier16}
corresponds to the special case 
where ${\bf u} \in U_\mathrm{train}=\{\Phi(z) \mid z \in P_S\}$. 
Thus, our representer theorem would suggest that the standard approach
of using $U_\mathrm{train}$ is reasonable.
}

Theorem~\ref{theo:represent} says that the weak learning problem
can be rewritten in the following tractable form:
\begin{OP}\label{align:WeakLearn}{\bf Weak Learning Problem}
\begin{align*}
\min_{{\boldsymbol{\alpha}}} \;& 
- \sum_{i=1}^m{d}_i y_i
\max_{x \in B_i} \sum_{z \in {P_S}} \alpha_{z} K\left(
                        z, x \right)
\\ \nonumber
\mathrm{sub.to} \; & 
\sum_{z \in {P_S}}\sum_{v \in {P_S}}\alpha_{z}\alpha_{v}K \left(z, v
\right)  \leq 1.
\end{align*}
\end{OP}



%
Unlike the primal solution ${\bf w}$, the dual solution
${\boldsymbol{\alpha}}$ is not expected to be sparse.
To obtain a more interpretable hypothesis,
we propose another formulation of weak learning where
1-norm regularization is imposed on ${\boldsymbol{\alpha}}$, so that
a sparse solution of ${\boldsymbol{\alpha}}$ will be obtained.
In other words, instead of $U$, we consider the feasible set
$\hat{U} = \left\{\sum_{z \in {P_S}}\alpha_{z} \Phi(z):
\|{\boldsymbol{\alpha}}\|_1 \leq 1\right\}$,
where $\|{\boldsymbol{\alpha}}\|_1$ is the 1-norm of ${\boldsymbol{\alpha}}$.
\begin{OP}\label{align:WeakLearnSP} {\bf Sparse Weak Learning Problem}
\begin{align*}
\min_{{\boldsymbol{\alpha}}} \; & 
- \sum_{i=1}^m{d}_i y_i
\max_{x \in B_i} \sum_{z \in {P_S}} \alpha_{z} K\left(
                        z, x\right)
\\
\mathrm{sub.to} \;\; & \|{\boldsymbol{\alpha}}\|_1  \leq 1
\end{align*}
\end{OP}
Note that when running LPBoost with a weak learner for
\textsf{OP}~\ref{align:WeakLearnSP}, we obtain a final hypothesis
that has the same form of generalization
bound as the one stated in Theorem~\ref{theo:main}, which
is of a final hypothesis obtained when used with a weak learner
for \textsf{OP}~\ref{align:WeakLearn}.
To see this, consider a feasible space
$\hat U_\Lambda = \left\{\sum_{z \in {P_S}}\alpha_{z} \Phi(z):
\|{\boldsymbol{\alpha}}\|_1 \leq \Lambda \right\}$
for a sufficiently small $\Lambda > 0$, so that
$\hat U_\Lambda \subseteq U$. Then since
$H_{\hat U_\Lambda} \subseteq H_U$, a generalization bound
for $H_U$ also applies to $H_{\hat U_\Lambda}$.
On the other hand, since the final hypothesis ${\mathrm{sign}} \circ g$
for $g \in {\mathrm{conv}}(H_{\hat U_\Lambda})$ is invariant to the
scaling factor $\Lambda$, the generalization ability is independent
of $\Lambda$.
\section{Algorithms}
\label{sec:algo}
In this section, we present the 
pseudo-code 
of LPBoost
in Algorithm~\ref{alg:LPBoost} for completeness.
Moreover,
we describe our algorithms for the weak learners.
For simplicity, we denote by ${\mathbf{k}}_x \in \mathbb{R}^{P_S}$
a vector given by $k_{x,z} = K(z,x)$ for every $z \in P_S$.
The objective function of \textsf{OP}~\ref{align:WeakLearn}
(and \textsf{OP}~\ref{align:WeakLearnSP})
is rewritten as
\[
	\sum_{i:y_i = -1} d_i \max_{x \in B_i} {\mathbf{k}}_x^T {\boldsymbol{\alpha}}
	- \sum_{i:y_i = 1} d_i \max_{x \in B_i} {\mathbf{k}}_x^T {\boldsymbol{\alpha}},
\]
which can be seen as a difference $F - G$ of two convex functions 
$F$ and $G$ of ${\boldsymbol{\alpha}}$.
Therefore, the weak learning problems are DC programs and thus
we can use DC algorithm~\citep{Tao1988,Yu:2009:LSS}
to find an $\epsilon$-approximation of a local optimum.
We employ a standard DC algorithm. That is, for each
iteration $t$, we linearize the concave term $G$
with $\nabla_{{\boldsymbol{\alpha}}} G({\boldsymbol{\alpha}}_t)^T {\boldsymbol{\alpha}}$ at
the current solution ${\boldsymbol{\alpha}}_t$, which is 
$\sum_{i:y_i=1} d_i {\mathbf{k}}_{x_i^*}^T {\boldsymbol{\alpha}}$ with
$x_i^* = \arg\max_{x \in B_i} {\mathbf{k}}_x^T {\boldsymbol{\alpha}}$ in our case,
and then update the solution to ${\boldsymbol{\alpha}}_{t+1}$ by solving the resultant
convex optimization problem $\textsf{OP}'_t$.

In addition, the problems $\textsf{OP}'_t$
for \textsf{OP}~\ref{align:WeakLearn} and \textsf{OP}~\ref{align:WeakLearnSP}
are reformulated as a second-order cone programming (SOCP) problem
and an LP problem, respectively, and thus both
problems can be efficiently solved.
To this end, we introduce new variables
$\lambda_i$ for all negative bags $B_i$ with $y_i = -1$
which represent the factors
$\max_{x \in B_i} {\mathbf{k}}_x^T {\boldsymbol{\alpha}}$.
Then we obtain the equivalent problem to $\textsf{OP}'_t$
for \textsf{OP}~\ref{align:WeakLearn}
as follows:
\begin{align}\label{align:WeakLearnSub}
\min_{{\boldsymbol{\alpha}}, {\boldsymbol{\lambda}}} \; & 
\sum_{i:y_i= -1}d_i \lambda_i
- \sum_{i:y_i = 1} d_i \max_{x \in B_i} {\mathbf{k}}_{x}^T {\boldsymbol{\alpha}} 
\\ \nonumber
\text{sub.to} \; &  {\mathbf{k}}_x^T {\boldsymbol{\alpha}}
                      \leq \lambda_i 
                      ~(\forall i:y_i=-1, \forall x \in B_i), \\ \nonumber
&\sum_{z \in {P_S}}\sum_{v \in {P_S}}\alpha_{z}\alpha_{v}K \left(z, v \right)  \leq 1.
\end{align}
It is well known that this is an SOCP problem.
Moreover, it is clear that $\textsf{OP}'_t$ for
\textsf{OP}~\ref{align:WeakLearnSP} can be formulated as an LP problem.
We describe the algorithm for \textsf{OP}~\ref{align:WeakLearn}
in Algorithm~\ref{alg:WeakLearn}.

One may concern that a kernel matrix may become large
when a sample consists 
a 
large amount of bags and instances.
However, note that the kernel matrix of $K(z, x)$ which is used in
Algorithm~\ref{alg:WeakLearn}
needs to be computed only once 
at the beginning of Algorithm~\ref{alg:LPBoost}, not at every iteration.

\modiff{
As a result, our learning algorithm outputs a classifier
\[
g(B) = \mathrm{sign}\left(\sum_{t=1}^T{w_t}\max_{x \in B}\sum_{z \in P_S} \alpha_{t, z} K (z, x) \right)
\]
where $w_t$ and ${\boldsymbol{\alpha}}_t$ are obtained in training phase.
Therefore, the computational cost for predicting the label of $B$ is
$O(T|P_S||B|)$ in the worst case when all elements of $\alpha_{t,z}$ are non-zero. 
However, when we employ
our sparse formulation {\sf OP} 2 which allows us to find a sparse $\alpha$,
the computational cost is expected to be much smaller than the worst case.
}
\begin{algorithm}[h!]
\caption{LPBoost using WeakLearn}
\label{alg:LPBoost}
\begin{algorithmic}[0]
\Inputs{$S$, kernel $K$, $\nu \in (0, 1]$, $\epsilon > 0$}
\Initialize{${\bf d}_0 \leftarrow (\frac{1}{m}, \ldots, \frac{1}{m}), \gamma=0$}
\For{ $t=1,\dots $}
 \State $h_t\leftarrow$  Run {\bf WeakLearn}($S$, $K$, ${\bf d}_{t-1}$,
   $\epsilon$)
 \If {$\sum_{i=1}^my_id_i h_t(B_i) \leq \gamma$}
 \State $t = t-1$, break
 \EndIf
\State
\begin{minipage}{5cm}
\begin{align}
\nonumber
& (\gamma, {\bf d}_t) \leftarrow \arg\min\limits_{\gamma,{\bf d}} \;  \gamma 
\\ \nonumber
&\text{sub.to}~~ 
 \sum_{i=1}^my_id_i h_j(B_i) \leq \gamma
~~(j = 1, \ldots, t), ~~
\\ \nonumber
&0 \leq d_i \leq 1/\nu m ~~(i \in [m]),  \sum_{i=1}^m d_i  = 1, ~ \gamma \in \mathbb{R}.
\end{align}
\end{minipage}
\EndFor
\State ${\bf w} \leftarrow$ Lagrangian multipliers of the last solution 
\State $g \leftarrow \sum_{j=1}^t w_j h_j$ \\
\Return {${\mathrm{sign}}(g)$}
\end{algorithmic}
\end{algorithm}

\begin{algorithm}[h!]
\caption{WeakLearn using the DC Algorithm}
\label{alg:WeakLearn}
\begin{algorithmic}[0]
\Inputs{$S$, $K$, 
    ${\bf d}$, $\epsilon$ (convergence parameter)}
 \Initialize{${\boldsymbol{\alpha}}_0 \in \mathbb{R}^{|{P_S}|}$, $f_0 \leftarrow \infty$}
 \For{$t=1,\dots $}
\For{$\forall k:y_k=+1$}
\State 
$\displaystyle x^*_k \leftarrow \arg\max_{x \in B_k} \sum_{z \in {P_S}}
   d_k\alpha_{t,z} K\left(z, x\right)$ 
\EndFor 
\State
\begin{minipage}{8cm}
\begin{align}\label{align:subprob}
 f \leftarrow \min_{{\boldsymbol{\alpha}}, \boldsymbol{\lambda}} 
\;& 
- \sum_{k:y_k=+1} {d}_k
\sum_{z \in {P_S}} \alpha_{z} K\left(z, x_k^*\right)
+ \sum_{r:y_r=-1}{d}_r \lambda_r 
\\ \nonumber
\text{sub.to} \;&  \sum_{z \in {P_S}} \alpha_{z} K\left(z, x  \right) 
                      \leq \lambda_r 
~(\forall r:y_r=-1, \forall x \in B_r),\\ \nonumber
&\sum_{z \in {P_S}}\sum_{v \in {P_S}} \alpha_{z}\alpha_{v}K
  \left(z, v \right)  \leq 1.
\end{align}
\end{minipage}
\State ${\boldsymbol{\alpha}}_t \leftarrow {\boldsymbol{\alpha}}$, $f_t \leftarrow f$
\If{$f_{t-1} - f_{t} \leq \epsilon$} 
\State break
\EndIf 
\EndFor \\
\Return{$h(B) =  \max_{x \in
    B}\sum_{z \in {P_S}}\alpha_{t,z}K(z, x)$}
\end{algorithmic}
\end{algorithm}

\section{Generalization Bound of the Hypothesis Class}
\label{sec:theorem1}
In this section, we provide a generalization bound of
hypothesis classes ${\mathrm{conv}}(H_U)$ for various $U$ and $K$.

Let $\Phi(P_S)=\{\Phi(z) \mid z \in P_S\}$.
Let ${\Phi_{\mathrm{diff}}(P_S)}=\{\Phi(z) - \Phi(z') \mid
z,z'\in P_S, z \neq z'\}$.
By viewing each instance ${{\bf v}} \in {\Phi_{\mathrm{diff}}(P_S)}$ as a hyperplane
$\{{\bf u} \mid \langle {{\bf v}}, {\bf u} \rangle = 0\}$,
we can naturally define a partition
of the Hilbert space ${\mathbb{H}}$
by the set of all hyperplanes ${{\bf v}} \in {\Phi_{\mathrm{diff}}(P_S)}$.
Let ${\mathcal{I}}$ be the set of all cells of the partition, that is,
${\mathcal{I}}=\{ I \mid
I=\cap_{{{\bf v}} \in V}
\{{\bf u} \mid  \langle {{\bf v}}, {\bf u} \rangle > 0\}, I \neq \emptyset,  V \subseteq \Phi_{\mathrm{diff}}(P_S), {\bf v} \in V\Leftrightarrow - {\bf v} \notin V \mathrm{~for~all~} {\bf v }\in \Phi_{\mathrm{diff}}(P_S)
\}$.
Each cell $I \in {\mathcal{I}}$ is a polyhedron which is defined by
a minimal set $V_I \subseteq {\Phi_{\mathrm{diff}}(P_S)}$ that
satisfies $I = \bigcap_{{{\bf v}} \in V_I}
\{{\bf u} \mid \langle {\bf u}, {{\bf v}} \rangle > 0\}$.
Let
\[
	\mu^* = \min_{I \in {\mathcal{I}}}\max_{{\bf u}\in I \cap U}\min_{{{\bf v}} \in V_I}
 |\langle {\bf u}, {{\bf v}} \rangle|. 
\]
Let $d^*_{\Phi,S}$ be the VC dimension of the set of linear classifiers
over the finite set ${\Phi_{\mathrm{diff}}(P_S)}$, given by
$F_U=\{ f: {{\bf v}} \mapsto {\mathrm{sign}}(\langle {\bf u}, {{\bf v}}\rangle) \mid {\bf u} \in U\}$.

Then we have the following generalization bound on the hypothesis class
of (\ref{align:our-hypo}).

\begin{theo}
\label{theo:main}
Let $\Phi: {\mathcal{X}} \rightarrow {\mathbb{H}}$.
Suppose that for any $z \in {\mathcal{X}}$, $\|\Phi(z)\|_{\mathbb{H}} \leq R$.
Then, for any $\rho>0$, 
with high probability 
the following holds
for any $g \in {\mathrm{conv}}(H_U)$ with
$U \subseteq \{{\bf u} \in {\mathbb{H}} \mid \|{\bf u}\|_{\mathbb{H}} \leq 1\}$:
\begin{align}
 {\mathcal{E}}_D(g) \leq& {\mathcal{E}}_{\rho}(g)
 +O\left(
 \frac{R \sqrt{d_{\Phi, S}^* \log |{P_S}|}}{\rho\sqrt{m}}  
\right),
\end{align}
where 
 (i) for any $\Phi$,
 $d^*_{\Phi,S}=O((R/\mu^*)^2)$,
(ii) if ${\mathcal{X}} \subseteq \mathbb{R}^\ell$ 
and $\Phi$ is the identity mapping (i.e., the associated kernel
 is the linear kernel), or
(iii) if ${\mathcal{X}} \subseteq \mathbb{R}^\ell$ 
and $\Phi$ satisfies the condition that
$\left\langle\Phi(z), \Phi(x) \right\rangle$ is monotone
decreasing with respect to $\|z-x\|_2$  (e.g.,
the mapping defined by the Gaussian kernel) and
$U=\{\Phi(z) \mid z \in \mathbb{R}^\ell, \|\Phi(z)\|_{\mathbb{H}} \leq 1\}$, 
 then $d^*_{\Phi, S}=O(\min((R/\mu^*)^2, \ell))$.
\end{theo}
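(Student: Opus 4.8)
The plan is to combine the standard Rademacher-complexity margin bound with a bound on the Rademacher complexity of $H_U$ that exploits the cell structure encoded by $\Phi_{\mathrm{diff}}(P_S)$. First I would invoke the margin-based bound of \citet{Bartlett:2003:RGC}: for any $\rho>0$, with probability at least $1-\delta$, every $g\in{\mathrm{conv}}(H_U)$ satisfies ${\mathcal{E}}_D(g)\le{\mathcal{E}}_\rho(g)+\frac{2}{\rho}{\mathcal{R}}_m({\mathrm{conv}}(H_U))+O(\sqrt{\log(1/\delta)/m})$, where ${\mathcal{R}}_m$ denotes the (empirical) Rademacher complexity. Since taking convex combinations does not increase Rademacher complexity \citep[see, e.g.,][]{Mohri.et.al_FML}, we have ${\mathcal{R}}_m({\mathrm{conv}}(H_U))={\mathcal{R}}_m(H_U)$, so it suffices to bound ${\mathcal{R}}_m(H_U)=\frac{1}{m}\mathbb{E}_\sigma[\sup_{{\bf u}\in U}\sum_{i=1}^m\sigma_i\max_{x\in B_i}\langle{\bf u},\Phi(x)\rangle]$. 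The source of difficulty is the inner $\max$, which makes $h_{\bf u}$ only piecewise linear in ${\bf u}$.

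The key idea is the cell decomposition. For fixed ${\bf u}$, the instance $x\in B_i$ attaining $\max_{x}\langle{\bf u},\Phi(x)\rangle$ is determined by the signs of $\langle{\bf u},\Phi(z)-\Phi(z')\rangle$ for $z,z'\in B_i\subseteq P_S$; hence as ${\bf u}$ ranges over a single cell $I\in{\mathcal{I}}$, the argmax instance $x_i^I\in B_i$ is the same for every bag $i$, and on $I$ the map ${\bf u}\mapsto(h_{\bf u}(B_i))_i$ is linear with $h_{\bf u}(B_i)=\langle{\bf u},\Phi(x_i^I)\rangle$. Writing $Z_I=\sum_{i=1}^m\sigma_i\Phi(x_i^I)$ and relaxing the constraint ${\bf u}\in I\cap U$ to the unit ball,
\[
\sup_{{\bf u}\in U}\sum_{i=1}^m\sigma_i\max_{x\in B_i}\langle{\bf u},\Phi(x)\rangle
=\max_{I\in{\mathcal{I}}}\sup_{{\bf u}\in I\cap U}\langle{\bf u},Z_I\rangle
\le\max_{I\in{\mathcal{I}}}\|Z_I\|_{\mathbb{H}}.
\]
I would then bound $\mathbb{E}_\sigma\max_{I}\|Z_I\|_{\mathbb{H}}$ by a Massart-type argument: each $\|Z_I\|_{\mathbb{H}}$ has mean at most $(\sum_i\|\Phi(x_i^I)\|_{\mathbb{H}}^2)^{1/2}\le R\sqrt{m}$ and, as a function of $\sigma$, has the bounded-difference property with constants $2R$, so it is subgaussian with variance proxy $mR^2$. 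Maximizing over the at most $|{\mathcal{I}}|$ distinct vectors $Z_I$ gives $\mathbb{E}_\sigma\max_I\|Z_I\|_{\mathbb{H}}=O(R\sqrt{m\log|{\mathcal{I}}|})$, whence ${\mathcal{R}}_m(H_U)=O(R\sqrt{\log|{\mathcal{I}}|/m})$.

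It remains to bound $\log|{\mathcal{I}}|$ and to settle the three cases. The number of cells is at most the number of dichotomies induced by $F_U$ on the $|\Phi_{\mathrm{diff}}(P_S)|=O(|P_S|^2)$ points of $\Phi_{\mathrm{diff}}(P_S)$; by Sauer--Shelah this is $O(|P_S|^{2d^*_{\Phi,S}})$, so $\log|{\mathcal{I}}|=O(d^*_{\Phi,S}\log|P_S|)$, yielding the stated main term. For case (i), the points of $\Phi_{\mathrm{diff}}(P_S)$ have norm at most $2R$ and, by the definition of $\mu^*$, each realizable dichotomy is attained by some unit ${\bf u}$ with margin at least $\mu^*$ against its defining constraints $V_I$; the classical large-margin bound on the VC dimension of hyperplanes then gives $d^*_{\Phi,S}=O((R/\mu^*)^2)$. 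For case (ii), $F_U$ is a class of linear classifiers on $\mathbb{R}^\ell$, of VC dimension at most $\ell+1$. For case (iii), monotonicity of $K$ gives ${\mathrm{sign}}(\langle\Phi(z''),\Phi(z)-\Phi(z')\rangle)={\mathrm{sign}}(\|z''-z'\|_2^2-\|z''-z\|_2^2)$, and since the quadratic terms in $z''$ cancel this sign is affine in the parameter $z''\in\mathbb{R}^\ell$, so $F_U$ is contained in a class of affine classifiers of VC dimension $O(\ell)$. Intersecting each with the bound from (i) yields $d^*_{\Phi,S}=O(\min((R/\mu^*)^2,\ell))$.

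The step I expect to be the main obstacle is making the cell decomposition and the concentration over ${\mathcal{I}}$ fully rigorous: in particular, justifying that the relaxation of $I\cap U$ to the unit ball followed by the Massart bound still produces exactly the factor $\sqrt{d^*_{\Phi,S}\log|P_S|}$ (rather than, say, a cruder $\sqrt{|{\mathcal{I}}|}$), and carefully verifying the geometric VC-dimension claim of case (iii) for the infinite-dimensional feature map, where the reduction to distance-comparison classifiers must be argued purely from the monotonicity hypothesis on $K$.
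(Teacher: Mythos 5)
Your proposal is correct and follows essentially the same architecture as the paper's proof: the convex-hull margin bound, the decomposition of $U$ into regions on which the argmax instance of every bag is fixed (your cell set ${\mathcal{I}}$ plays the role of the paper's set $\Theta$ of mappings $\theta_{\bf u}$), the Cauchy--Schwarz relaxation $\sup_{{\bf u}\in I\cap U}\langle{\bf u},Z_I\rangle\le\|Z_I\|_{\mathbb{H}}$, the dual Sauer--Shelah count $|\Theta|=O(|P_S|^{2d^*_{\Phi,S}})$ over the $O(|P_S|^2)$ points of ${\Phi_{\mathrm{diff}}(P_S)}$, and the same three VC-dimension cases. The one place you genuinely diverge is the concentration step: the paper passes to the Gaussian complexity (Lemma~\ref{lemm:RC_and_GC}) and bounds $\mathop{\rm E}_{{\boldsymbol{\sigma}}}\bigl[\sup_{\theta}\|\sum_i\sigma_i\Phi(\theta(B_i))\|_{\mathbb{H}}^2\bigr]$ by an explicit moment-generating-function computation over the eigenvalues of the kernel matrices ${\boldsymbol{\mathrm{K}}}^{(\theta)}$, whereas you stay with Rademacher variables and control $\mathop{\rm E}_{\boldsymbol{\sigma}}\max_I\|Z_I\|_{\mathbb{H}}$ directly via bounded differences plus the subgaussian maximal inequality over the at most $|{\mathcal{I}}|$ vectors $Z_I$. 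Both routes yield $O(R\sqrt{m\log|\Theta|})$ and hence the stated $O\bigl(R\sqrt{d^*_{\Phi,S}\log|P_S|}/(\rho\sqrt m)\bigr)$ term; yours is more elementary and skips the Gaussian-to-Rademacher detour, at the price of slightly looser constants. Your case (iii) argument --- that $\mathrm{sign}(\langle\Phi(z''),\Phi(z)-\Phi(z')\rangle)=\mathrm{sign}(\|z''-z'\|_2^2-\|z''-z\|_2^2)$ with the quadratic terms in $z''$ cancelling, so the dual classifiers are affine in the $\ell$-dimensional parameter --- is actually more explicit than the paper's, which handles (iii) by asserting a reduction to the linear-kernel case without spelling out this cancellation.
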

We show the proof in Appendix~\ref{proof2}.

\paragraph{Comparison with the existing bounds}
A similar generalization bound 
can be derived from 
a known bound of the Rademacher complexity of $H_U$ \citep[Theorem 20 of][]{Sabato:2012:MLA}
and a generalization bound of ${\mathrm{conv}(H)}$ for any hypothesis class $H$
\citep[see Corollary 6.1 of][]{Mohri.et.al_FML}:
\[
{\mathcal{E}}_D(g) \leq {\mathcal{E}}_{\rho}(g) + O\left(\frac{{\log
      \left(\sum_{i=1}^m|B_i| \right)\log(m)}}{\rho\sqrt{m}} \right).
\]
Note that \citet{Sabato:2012:MLA} fixed $R=1$. 
For simplicity, we omit some constants of \citep[Theorem 20 of][]{Sabato:2012:MLA}.
Note that $|{P_S}| \leq \sum_{i=1}^m|B_i|$ by definition.
The bound above is incomparable to
Theorem~\ref{theo:main} in general, as ours uses the parameter $d^*_{\Phi,S}$ and
the other has the extra $\sqrt{\log\left(\sum_{i=1}^m|B_i|
  \right)}\log(m)$ term.
However, our bound is better in terms of the sample size $m$ by the
factor of $O(\log m)$ when other parameters are regarded as constants.

\section{SL by MIL}
\subsection{Time-Series Classification with Shapelets}
In the following, we introduce a framework of time-series classification 
problem based on shapelets (i.e., SL problem). 
As mentioned in the previous section,
a time series ${\boldsymbol{\tau}} = (\tau[1], \dots, \tau[L]) \in \mathbb{R}^L$
can be identified with a bag
$B_{\boldsymbol{\tau}} = \{(\tau[j], \ldots, \tau[j+\ell-1])
\mid 1 \leq j \leq L-\ell+1\}$ that consists of all subsequences
of ${\boldsymbol{\tau}}$ of length $\ell$.
The learner receives a labeled sample 
$S = ((B_{{\boldsymbol{\tau}}_1}, y_1), \ldots, (B_{{\boldsymbol{\tau}}_m}, y_m)) \in (2^{\mathbb{R}{^\ell}}
\times \{-1, 1\})^m$, 
where each labeled bag (i.e. labeled time series) 
is independently drawn 
according to some unknown distribution $D$ over a finite
support of $2^{\mathbb{R}^{\ell}} \times \{-1, +1\}$.
The goal of the learner is to predict the labels of 
an unseen time series correctly.
In this way, the SL problem can be viewed as an MIL problem,
and thus we can apply our algorithms and theory.

Note that, for time-series classification, various similarity measures 
can be represented by a kernel.
For example, the Gaussian kernel (behaves like the Euclidean distance)
and Dynamic Time Warping (DTW) kernel.
Moreover, our framework can generally apply to non-real-valued sequence data 
(e.g., text, and a discrete signal) using a string kernel.

\subsection{Our Theory and Algorithms for SL}
\label{sec:theory_SL}
By Theorem~\ref{theo:main}, 
we can immediately obtain the generalization bound of 
our hypothesis class in SL as follows:
\begin{coro}
\label{coro:ts}
Consider time-series sample $S$ of size $m$ and length $L$.
For any fixed $\ell < L$, the following generalization error bound
holds for all $g \in {\mathrm{conv}}(H_U)$ in which the length of shapelet
is $\ell$:
\[
{\mathcal{E}}_D(g) \leq {\mathcal{E}}_{\rho}(g) +
O\left(
 \frac{R \sqrt{d_{\Phi, S}^* \log (m(L-\ell +1))}}{\rho\sqrt{m}} \right).
\]
\end{coro}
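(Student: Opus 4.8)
The plan is to treat this corollary as a direct specialization of Theorem~\ref{theo:main}, since the preceding subsection establishes that an SL task is literally an instance of the MIL framework of Section~\ref{sec:prelim}. First I would invoke that reduction explicitly: each time series ${\boldsymbol{\tau}}_i \in \mathbb{R}^L$ is identified with the bag $B_{{\boldsymbol{\tau}}_i}$ of all its contiguous length-$\ell$ subsequences, so the instance space becomes ${\mathcal{X}} = \mathbb{R}^\ell$, the labeled sample $S = ((B_{{\boldsymbol{\tau}}_1}, y_1), \dots, (B_{{\boldsymbol{\tau}}_m}, y_m))$ is a bona fide MIL sample, and $\mathrm{conv}(H_U)$ restricted to length-$\ell$ shapelets is exactly the hypothesis class to which Theorem~\ref{theo:main} applies (under $\|\Phi(z)\|_{\mathbb{H}} \leq R$ and the normalization $U \subseteq \{{\bf u} : \|{\bf u}\|_{\mathbb{H}} \leq 1\}$). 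Applying Theorem~\ref{theo:main} verbatim then yields
\[
{\mathcal{E}}_D(g) \leq {\mathcal{E}}_{\rho}(g) + O\left(\frac{R\sqrt{d^*_{\Phi,S}\log|{P_S}|}}{\rho\sqrt{m}}\right),
\]
with the same data-dependent quantity $d^*_{\Phi,S}$, which is defined purely in terms of ${P_S}$ and $\Phi$ and hence carries over unchanged.

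The only remaining step is to control the single factor $\log|{P_S}|$ that distinguishes the corollary's bound from the theorem's. Here I would use an elementary counting bound: each bag $B_{{\boldsymbol{\tau}}_i}$ consists of exactly the $L - \ell + 1$ contiguous length-$\ell$ subsequences of a length-$L$ series, so $|B_{{\boldsymbol{\tau}}_i}| = L - \ell + 1$, and therefore
\[
|{P_S}| = \left|\bigcup_{i=1}^m B_{{\boldsymbol{\tau}}_i}\right| \leq \sum_{i=1}^m |B_{{\boldsymbol{\tau}}_i}| = m(L-\ell+1).
\]
Substituting $\log|{P_S}| \leq \log(m(L-\ell+1))$ into the displayed bound produces exactly the claimed inequality, completing the argument.

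I expect no genuinely hard obstacle here: the corollary is a repackaging of Theorem~\ref{theo:main} together with a one-line instance count. The only point deserving care is the bookkeeping that the hypotheses of Theorem~\ref{theo:main} — a distribution with finite support over bags, the norm bound $R$ on feature vectors, and the normalization of $U$ — are all inherited by the SL setting, which the subsection's reduction already guarantees, so that $d^*_{\Phi,S}$ and $R$ retain their meaning verbatim in the time-series instance.
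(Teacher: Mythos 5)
Your proposal is correct and matches the paper's (implicit) argument: the paper simply states that the corollary follows immediately from Theorem~\ref{theo:main} via the identification of a time series with the bag of its length-$\ell$ subsequences, and your substitution $|{P_S}| \leq m(L-\ell+1)$ into the $\log|{P_S}|$ factor is exactly the missing one-line computation. No issues.
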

To the best of our knowledge, this is the first result on the 
generalization performance
of SL. \par
Theorem~\ref{theo:represent} gives justification 
to the heuristics which choose the shapelets
extracted from the instances appearing in the training sample
(i.e., the subsequences for SL tasks).
Moreover, several methods using 
a 
linear combination of shapelet-based
classifiers \citep[e.g.,][]{Hills:2014:CTS:2597434.2597448,Grabocka:2014:LTS:2623330.2623613},
are supported by Corollary~\ref{coro:ts}.

For time-series classification problem, shapelet-based classification
has a greater advantage of 
the interpretability or visibility
than other
time-series classification methods \citep[see, e.g.,][]{Ye:2009:TSS:1557019.1557122}.
Although we use a nonlinear kernel function,
we can observe important subsequences 
that contribute to effective shapelets by solving OP~\ref{align:WeakLearnSP} 
because of the sparsity (see also the experimental results).
Moreover, for unseen time-series data,
we can observe the type of subsequences that contribute
to the predicted class
by observing maximizer $x \in B$. 
\subsection{Learning Shapelets of Different Lengths}
\label{sec:general_shapelets}
For time-series classification, 
many existing methods take advantage of using shapelets of various lengths.
Below, we show that our formulation can be easily applied to the case.

A time series
${\boldsymbol{\tau}} = (\tau[1], \dots, \tau[L]) \in \mathbb{R}^L$
can be identified with a bag
$B_{\boldsymbol{\tau}} = \{(\tau[j], \ldots, \tau[j+\ell-1])
\mid 1 \leq j \leq L-\ell+1, \forall \ell \in Q \}$ that consists of all
length $\ell \in Q \subseteq \{1, \ldots, L\} $ of subsequences of ${\boldsymbol{\tau}}$.
That is, this is also a special case of MIL that 
a bag contains different dimensional instances.

There is a simple way to
apply our learning algorithm to this case.
We just employ some kernels $K(z, x)$ which supports
different dimensional instance pairs $z$ and $x$.
Fortunately, such kernels have been studied well in the time-series domain.
For example, DTW kernel and Global Alignment kernel~\citep{cuturi2011fast}
are well-known kernels which support time series of different lengths.
However, the size of the kernel matrix of $K(z, x)$ 
becomes 
$m(\sum_{\ell \in Q}(L-\ell +1))^2$.
In practice, it requires high memory cost for large time-series data.
Moreover, in general, the above kernel requires a higher computational cost
than standard kernels.

We introduce a practical way to learn shapelets of different lengths
based on heuristics.
In each weak learning problem, 
we decomposed the original weak learning problem over different dimensional data space
into the weak learning problems over each dimensional data space.
For example, we consider solving the following problem instead of 
the weak learning problem \textsf{OP}~\ref{align:WeakLearn}:
\begin{align*}
\min_\ell\min_{{\boldsymbol{\alpha}}} \;& 
- \sum_{i=1}^m{d}_i y_i
\max_{x \in B_{i}^\ell} \sum_{z \in {P_S}^\ell} \alpha_{z} K\left(
                        z, x \right),\\ 
\text{sub.to} \; & 
\sum_{z \in {P_S}^\ell}\sum_{v \in {P_S}^\ell}\alpha_{z}\alpha_{v}K \left(z, v \right)  \leq 1.
\end{align*}
where $B_{i}^\ell$ denotes the $\ell$ dimensional instances 
(i.e., length $\ell$ of subsequences) in $B_i$,
and ${P_S}^\ell$ denotes $\bigcup_{i=1}^m B_i^\ell$.
The total size of kernel matrices 
becomes 
$m\sum_{\ell \in Q}((L-\ell +1))^2$,
and thus this method does not require 
so large kernel matrix.
Moreover, in this way, we do not need to use a kernel which supports different dimensional
instances.
Note that, even using this heuristic, the obtained final hypothesis 
has theoretical generalization performance. 
This is because the hypothesis class still represented as the form of 
(\ref{align:final_hypo}).
In our experiment, we use the latter method by giving weight to memory efficiency.

\subsection{Heuristics for computational efficiency}
\label{sec:exp_app}
For the practical applications, we introduce some heuristics for improving efficiency
in our algorithm.
\paragraph{Reduction of ${P_S}$}
Especially for time-series data, 
the size $|{P_S}|$ often becomes large 
because $|{P_S}| = O(mL)$.
Therefore, constructing a kernel matrix of $|{P_S}| \times |{P_S}|$
has high computational costs for time-series data.
For example, when we 
consider subsequences as instances for time series classification, 
we have a large computational cost because of the number of subsequences 
of training data (e.g., approximately $10^6$ when sample size is $1000$ and
length of each time series is $1000$, 
which results in a similarity matrix of size $10^{12}$). 
However, in most cases, many subsequences in time series data 
are similar to each other.
Therefore, we only use representative instances ${\hat{P}_S}$
instead of the set of all instances ${P_S}$. 
In this paper, we use $k$-means clustering 
to reduce the size of $|{P_S}|$.
Note that our heuristic approach is still supported by our
theoretical generalization error bound.
This is because the hypothesis set $H_{U'}$ with
the reduced shapelets $U'$ is the subset of $H_U$,
and the Rademacher complexity of $H_{U'}$ is exactly
smaller than the Rademacher complexity of $H_U$. 
Thus, Theorem~\ref{theo:main} holds for the hypothesis 
class considering the set $H_U$ of all possible shapelets $U$, 
and thus Theorem~\ref{theo:main} also holds for
the hypothesis class using the set $H_{U'}$ 
of some reduced shapelets $U'$.
Although this approach may decrease the training classification accuracy in practice,
it drastically decreases the computational cost for a large dataset.

\paragraph{Initialization in weak learning problem}
DC program may slowly converge to local optimum 
depending on the initial solution.
In Algorithm~\ref{alg:WeakLearn}, we fix an initial
${\boldsymbol{\alpha}}_{0}$ as following:
More precisely, we initially solve 
\begin{align}
\label{align:shape_opt}
{\boldsymbol{\alpha}}_0 = \arg\max_{{\boldsymbol{\alpha}}}& \sum_{i=1}^md_iy_i\max_{x \in B_i} \sum_{z \in {P_S}}
   \alpha_{z} K\left(z, x \right), \\ \nonumber
   ~~\text{sub.to}&~~ {\boldsymbol{\alpha}} ~\text{is a one-hot vector.}
\end{align}
That is, we choose the most discriminative shapelet from ${P_S}$
as the initial point of ${\bf u}$ for given ${\bf d}$.
We expect that it will speed up the convergence 
of the loop of line 3, and the obtained classifier is better 
than the methods that choose effective 
shapelets from subsequences.

\section{Experiments}
\label{sec:experiments}
In this section, 
we show some experimental results
implying that our algorithm performs comparably
with the existing shapelet-based classifiers for
both SL and MIL tasks~\footnote{The code of our method is available in \url{https://github.com/suehiro93/MILIMS_NECO}}.

\subsection{Results for Time-Series Data}
\label{subsec:exp_ts}
We use binary labeled datasets\footnote{Note that our method is
applicable to multi-class classification tasks by easy expansion
(e.g., \cite{NIPS1999_1773}).}
available in UCR datasets~\citep{UCRArchive}, which are often used as
benchmark datasets for time-series classification methods.
We used a weak learning problem OP~\ref{align:WeakLearnSP}
because the interpretability of the obtained classifier is required 
in shapelet-based time-series classification.

We compare the following three shapelet-based approaches.
\begin{itemize}
\item Shapelet Transform (ST) provided by~\citet{Bagnall2017}
\item Learning Time-Series Shapelets (LTS) provided by~\citet{Grabocka:2014:LTS:2623330.2623613} 
\item Our algorithm using shapelets of different lengths (Ours)
\end{itemize}
We used the implementation of ST provided by~\citet{sktime},
and used the implementation of LTS provided by~\citet{tslearn}.
\modiff{
The classification rule of Shapelets Transform has the form:
\[
g(B)=f\left(\max_{x \in B}-\|z_1 - x\|, \ldots, \max_{x \in B}-\|z_k - x\| \right),
\]
where $f$ is a user-defined classification function (the implementation employs decision forest),
$z_1, \ldots, z_k \in P_S$ (in the time-series domain, this $z_j$ is called a shapelet).
The shapelets are chosen from training subsequences in some complicated way 
before learning $f$.
The classification rule of Learning Time-series Shapelets has the form:
\[
g(B)=\sum_{j=1}^k w_j\max_{x \in B}-\|z_j - x\|,
\]
where $w_j \in \mathbb{R}$ and $z_j \in \mathbb{R}^{\ell}$ are learned parameters,
the number of desired shapelets $k$ is a hyper-parameter.
}

Below we show the detail condition of the experiment.
For ST, 
we set the shapelet lengths $\{2, \dots, L/2\}$,
where $L$ is the length of each time series in the dataset.
ST also requires a parameter of time limit for searching shapelets,
and we set it as 5 hours for each dataset.
For LTS, we used the hyper-parameter sets (regularization 
parameter, 
number of shapelets, etc.)
that the 
authors 
recommended in their
website\footnote{\url{http://fs.ismll.de/publicspace/LearningShapelets/}},
and we found an 
optimal 
hyper-parameter by $3$-fold cross-validation for
each dataset.
For our algorithms, we implemented a weak learning algorithm
which supports shapelets of different lengths (see Section~\ref{sec:general_shapelets}).
In this experiment, we consider the case that 
each bag contains lengths 
$\{0.05,  0.1,  0.15,  \ldots, 0.5\}\times L$ of the subsequences. 
We used the Gaussian kernel $K(x, x') = \exp(-\frac{\|x - x'\|^2}{\ell\sigma^2})$,
chose $1/\sigma^2$ from $\{0.01, 0.05, \allowbreak 0.1, \ldots, 50 \}$.
We chose $\nu$ from $\{0.1, 0.2, 0.3, 0.4\}$.
We use $100$-means clustering with respect to each class to reduce $P_S$.
The parameters we should tune are only $\nu$ and $\sigma$. 
We tuned these parameters via a procedure we give 
in Appendix~\ref{sec:exp_app2}.
As an LP solver for 
WeakLearn and LPBoost we used the CPLEX software.
In addition to Ours, LTS 
employs 
$k$-means clustering
to set the 
initial 
shapelets in the optimization algorithm.
Therefore, we show the average accuracies
for LTS and Ours considering the randomness of $k$-means clustering.

The classification accuracy results
are shown in Table~\ref{tab:acc1}.
We can see that our algorithms achieve comparable
performance with ST and LTS.
\modiff{
We conducted the Wilcoxon signed-rank test between Ours and 
the others.
The $p$-value of Wilcoxon signed-rank test for
Ours and ST is 0.1247.
The $p$-value of Wilcoxon signed-rank test for
Ours and LTS is 0.6219.
The $p$-values are higher than 0.05,
and thus we cannot rejcect that there is no significant difference between the medians of the accuracies.
}
We can say that our MIL algorithm works well for 
time-series classification tasks without using 
domain-specific knowledge.
\par
\modiff{
We would like to compare the computation time of these methods.
We selected the datasets that these three methods have achieved similar performance.
The experiments are performed on
Intel Xeon Gold 6154, 36 core CPU, 192GB memory.
Table~\ref{tab:ts_traintime}
shows the comparison of the running time of the training.
Note that again, for ST, we set the limitation of the running time as $5$ hours for finding good shapelets.
This running time limitation is a hyper-parameter of the code and it is difficult to be estimated before experiments.
LTS efficiently worked compared with ST and Ours. 
However, it seems that LTS achieved lower performance than ST and Ours on accuracy.
Table~\ref{tab:ts_testtime} shows the testing time of the methods.
LTS also efficiently worked, simply because LTS finds effective shapelets of a fixed number (hyper-parameter).
ST and Ours may find a large number of shapelets and this increases the computation time of prediction.
For Wafer dataset, ST and Ours required large computation time compared with LTS.}
\par
\modiff{
We can not fairly compare the efficiency of these methods because 
the implementation environments (e.g., programming languages) are different.
However, we can say that the proposed method totally achieved high classification accuracy
with reasonable running time for training and prediction.}

\renewcommand{\arraystretch}{0.7}
\begin{table*}
\centering
\caption{Classification accuracies for time-series datasets. \label{tab:acc1}}
\begin{tabular}{|c ||c|c|c|} \hline
Dataset & ST & LTS & Ours \\ \hline
BeetleFly & 0.8 & 0.765 & {\bf 0.835} \\
BirdChicken & 0.9 & 0.93 & {\bf 0.935} \\
Coffee & 0.964 & {\bf 1} & 0.964 \\
Computers & {\bf 0.704} & 0.619 & 0.623 \\
DistalPhalanxOutlineCorrect & 0.757 & 0.714 & {\bf 0.802} \\
Earthquakes & 0.741 & {\bf 0.748} & 0.728 \\
ECG200 & 0.85 & 0.835 & {\bf 0.872} \\
ECGFiveDays & 0.999 & 0.961 & {\bf 1} \\
FordA & 0.856 & {\bf 0.914} & 0.89 \\
FordB & 0.74 & {\bf 0.9} & 0.786 \\
GunPoint & {\bf 0.987} & 0.971 & {\bf 0.987} \\
Ham & 0.762 & {\bf 0.782} & 0.698 \\
HandOutlines & {\bf 0.919} & 0.892 & 0.87 \\
Herring & 0.594 & {\bf 0.652} & 0.588 \\
ItalyPowerDemand & 0.947 & {\bf 0.951} & 0.943 \\
Lightning2 & 0.639 & 0.695 & {\bf 0.779} \\
MiddlePhalanxOutlineCorrect & {\bf 0.794} & 0.579 & 0.632 \\
MoteStrain & {\bf 0.927} & 0.849 & 0.845 \\
PhalangesOutlinesCorrect & 0.773 & 0.633 & {\bf 0.792} \\
ProximalPhalanxOutlineCorrect & {\bf 0.869} & 0.742 & 0.844 \\
ShapeletSim & 0.994 & 0.989 & {\bf 1} \\
SonyAIBORobotSurface1 & {\bf 0.932} & 0.903 & 0.841 \\
SonyAIBORobotSurface2 & {\bf 0.922} & 0.895 & 0.887 \\
Strawberry & 0.941 & 0.844 & {\bf 0.947} \\
ToeSegmentation1 & {\bf 0.956} & 0.947 & 0.906 \\
ToeSegmentation2 & 0.792 & {\bf 0.886} & 0.823 \\
TwoLeadECG &{\bf  0.995} & 0.981 & 0.949 \\
Wafer & {\bf 1} & 0.993 & 0.991 \\
Wine & {\bf 0.741} & 0.487 & 0.72 \\
WormsTwoClass & {\bf 0.831} & 0.752 & 0.608 \\
Yoga & {\bf 0.847} & 0.69 & 0.804 \\ \hline
\end{tabular}
\end{table*}
\renewcommand{\arraystretch}{1.0}

\begin{table*}[h!]
\centering
\caption{Training time (sec.) for several time series datasets. \label{tab:ts_traintime}}
\begin{tabular}{|c |c|c||c|c|c| } \hline
dataset & \#train&length&
ST & LTS & 
Ours 
 \\ \hline
Earthquakes & $322$& $512$ & 
$18889.8$ & $250.5$ & $1339.2$
\\ 
GunPont &  $50$ & $150$ 
& $18016.2$  & $22.3$ & $36.9$
\\
ItalyPowerDemand & $67$& $24$ & 
$18000.8$ & $11.5$ & $8.6$  
\\ 
ShapeletSim & $20$& $180$ 
& $18011.6$ &$30.4$& $32.8$
\\ 
Wafer & $1000$ & $152$ 
& $18900.8$ & $91.5$ & $431.7$
\\ 
\hline
\end{tabular}
\end{table*}

\begin{table*}[h!]
\centering
\caption{Testing time (sec.) for several time series datasets. \label{tab:ts_testtime}}
\begin{tabular}{|c |c|c||c|c|c| } \hline
dataset & \#test&length&
ST & LTS & 
Ours 
 \\ \hline
Earthquakes & $139$& $512$ & 
$389.7$ & $2.75$ & $11.55$
\\ 
GunPont &  $150$ & $150$ 
& $48.0$  & $1.1$ & $3.9$
\\
ItalyPowerDemand & $1029$& $24$ & 
$3.3$ & $0.5$ & $10.7$ 
\\ 
ShapeletSim & $180$& $180$ 
& $104.0$ &$1.8$& $1.1$
\\ 
Wafer & $6164$ & $152$ 
& $5688.2$ & $4.3$ & $173.1$
\\ 
\hline
\end{tabular}
\end{table*}

\paragraph{Interpretability of our method}
We would like to show the interpretability of our method.
We use CBF dataset which contains three classes 
(cylinder, bell, and funnel) of time series.
The reason is that, it is known that
the discriminative patterns are clear, and thus 
we can easily ascertain if the obtained hypothesis
can capture the effective shapelets.
For simplicity, we obtain 
a 
binary classification model
for each class preparing one-vs-others training set.
We used Ours with fixed shapelet length $\ell=25$.
As following, we introduce two types of visualization
approach to interpret a learned model.

One is the visualization of the 
characteristic subsequences of an input time series.
When we predict the label of the time series $B$, 
we calculate a maximizer $x^*$ in $B$ for each $h_{\bf u}$,
that is, $x^*=\arg\max_{x \in B} \langle {\bf u}, \Phi(x)\rangle$.
For image recognition tasks,
the maximizers are commonly used to observe
the sub-images that characterize the 
class of the input image \citep[e.g.,][]{MI1normSVM}.
In time-series classification tasks, 
the maximizers also can be used
to observe some characteristic subsequences.
Fig.~\ref{fig:cbf_subseq} is an example
of 
a 
visualization of maximizers.
Each value in the
legend indicates $w_{\bf u} \max_{x \in B}\langle{\bf u}, \Phi(x)\rangle$.
That is, subsequences with positive 
values contribute to 
the positive class and subsequences 
with negative 
values contribute to
the negative class.
Such visualization provides
the subsequences that
characterize the class of the input time series.
For cylinder class, although both positive and negative patterns
match almost the same subsequence,
the positive pattern is stronger than negative, and thus 
the hypothesis can correctly discriminate the time series.
For bell and funnel class, we can observe that 
the highlighted subsequences clearly indicate 
the discriminative patterns.

The other is the visualization of a final hypothesis
$g(B) = \sum_{j=1}^tw_jh_j(B)$, where $h_j(B)=\max_{x \in B}\sum_{z_j \in
 {\hat{P}_S}} \alpha_{j, z_j}K(z_j, x)$ (${\hat{P}_S}$ is the set
of representative subsequences
obtained by $k$-means clustering).
Fig.~\ref{fig:cbf_shapelet}
is an example of 
the 
visualization
of a final hypothesis obtained by our algorithm.
The colored lines are all the 
$z_j$s in $g$ where both $w_j$ and $\alpha_{j, z_j}$ were non-zero.
Each legend value shows the multiplication of 
$w_j$ and $\alpha_{j, z_j}$ corresponding to $z_j$.
That is, positive values of the colored lines indicate 
the contribution rate for the positive class, 
and negative values indicate the contribution rate 
for the negative class.
Note that, because it is difficult to visualize the shapelets
over the Hilbert space associated with the Gaussian kernel,
we plotted each of them to match the original time series based on 
the Euclidean distance.
Unlike the previous visualization analyses
\citep[see, e.g.,][]{Ye:2009:TSS:1557019.1557122}, 
our visualization does not exactly interpret 
the final hypothesis because 
of the non-linear feature map.
However, we can deduce that the colored lines represent
``important patterns'', which make significant
contributions to classification.
\begin{figure}[h!]
\centering
\begin{tabular}{c}
\begin{minipage}{0.5\hsize}
\centering
  \includegraphics[width=50mm, height=45mm]{./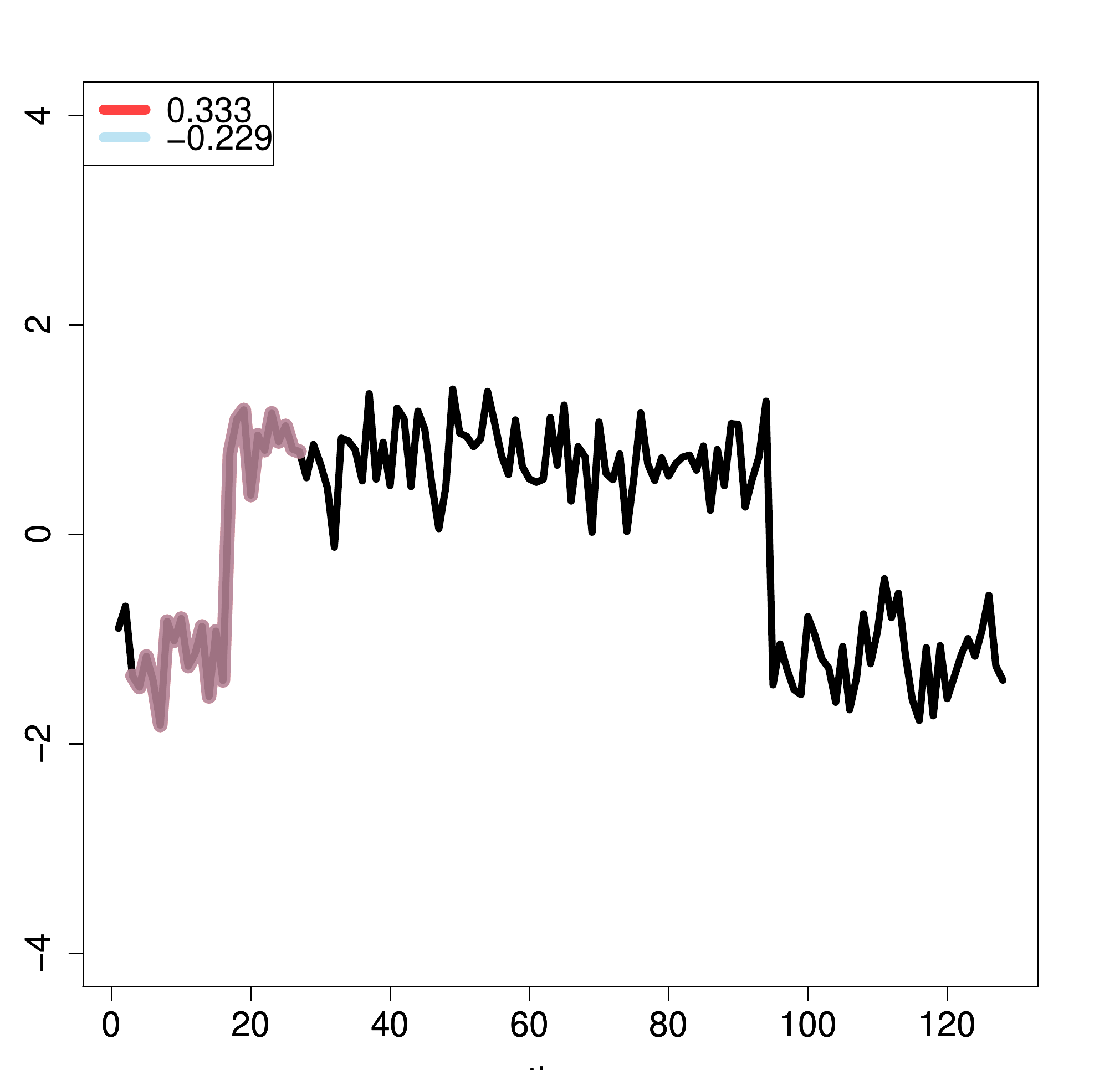}
  \\(cylinder) 
\end{minipage}
\begin{minipage}{0.5\hsize}
\centering
\includegraphics[width=50mm, height=45mm]{./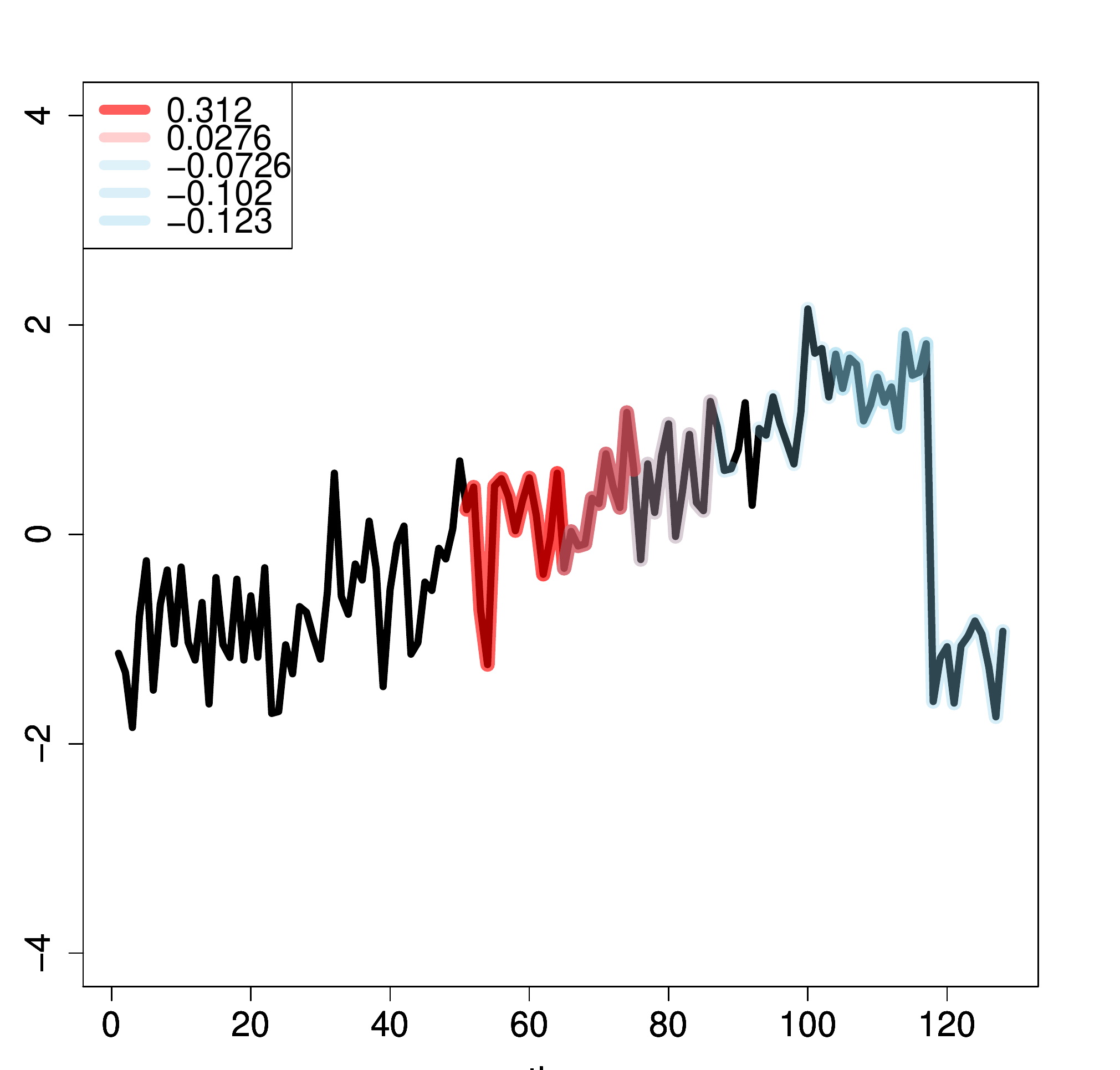}
        \hspace{1.6cm} (bell) 
\end{minipage}\\
\begin{minipage}{0.5\hsize}
\centering
\includegraphics[width=50mm, height=45mm]{./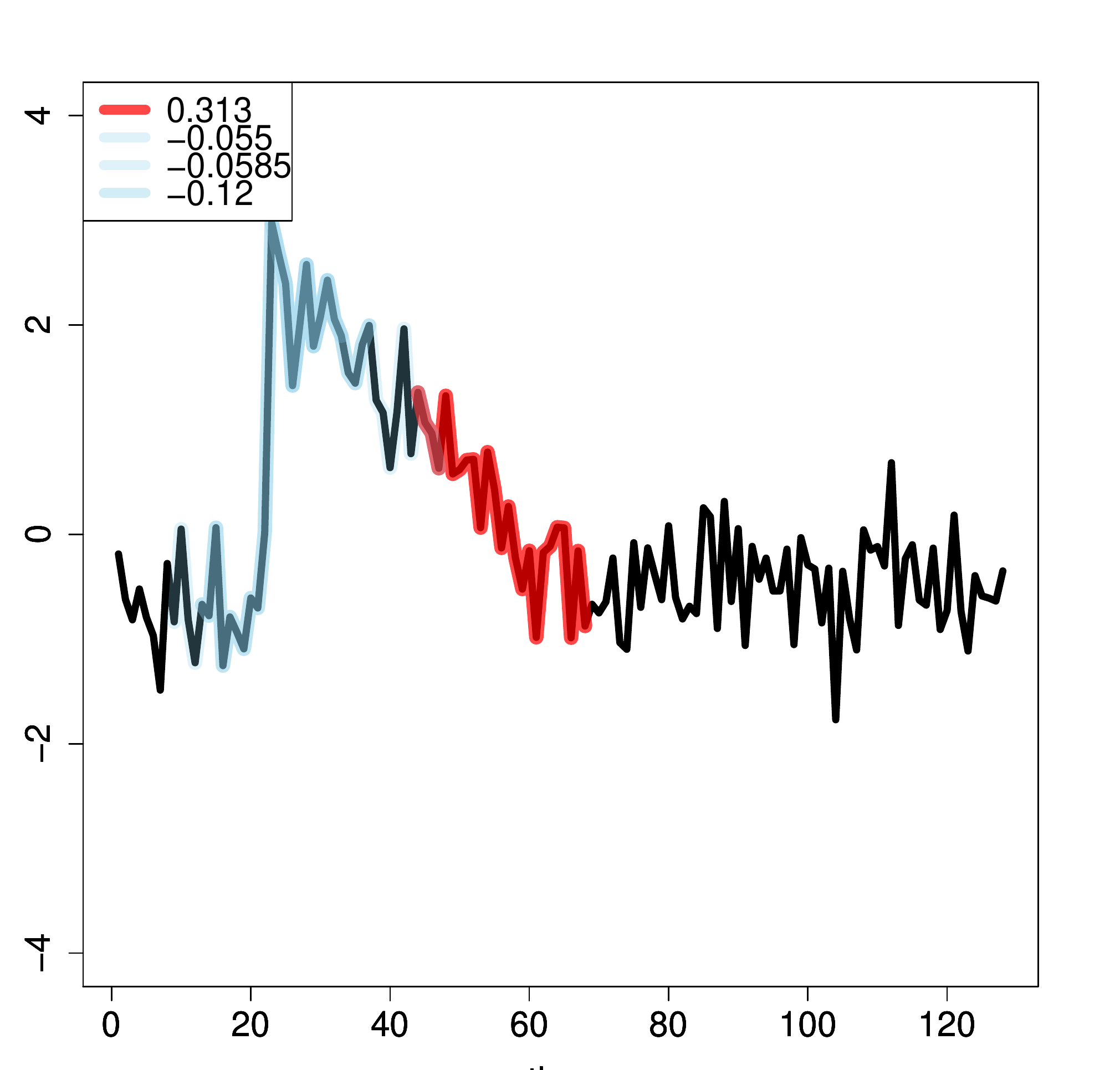}
        \hspace{1.6cm} (funnel) 
\end{minipage}
\end{tabular}
\caption{Examples of the visualization of maximizers for a CBF time-series
   data. Black lines are original time series.
   We highlight each subsequence that maximizes the
   similarity with some shapelet in a classifier. 
   Subsequences with positive values (red) contribute to the positive class and
   subsequences with negative values (blue) contribute to the negative
   class. \label{fig:cbf_subseq}}
\end{figure}

\begin{figure}[h!]
\centering
\begin{tabular}{c}
\begin{minipage}{0.5\hsize}
\centering
  \includegraphics[width=50mm, height=45mm]{./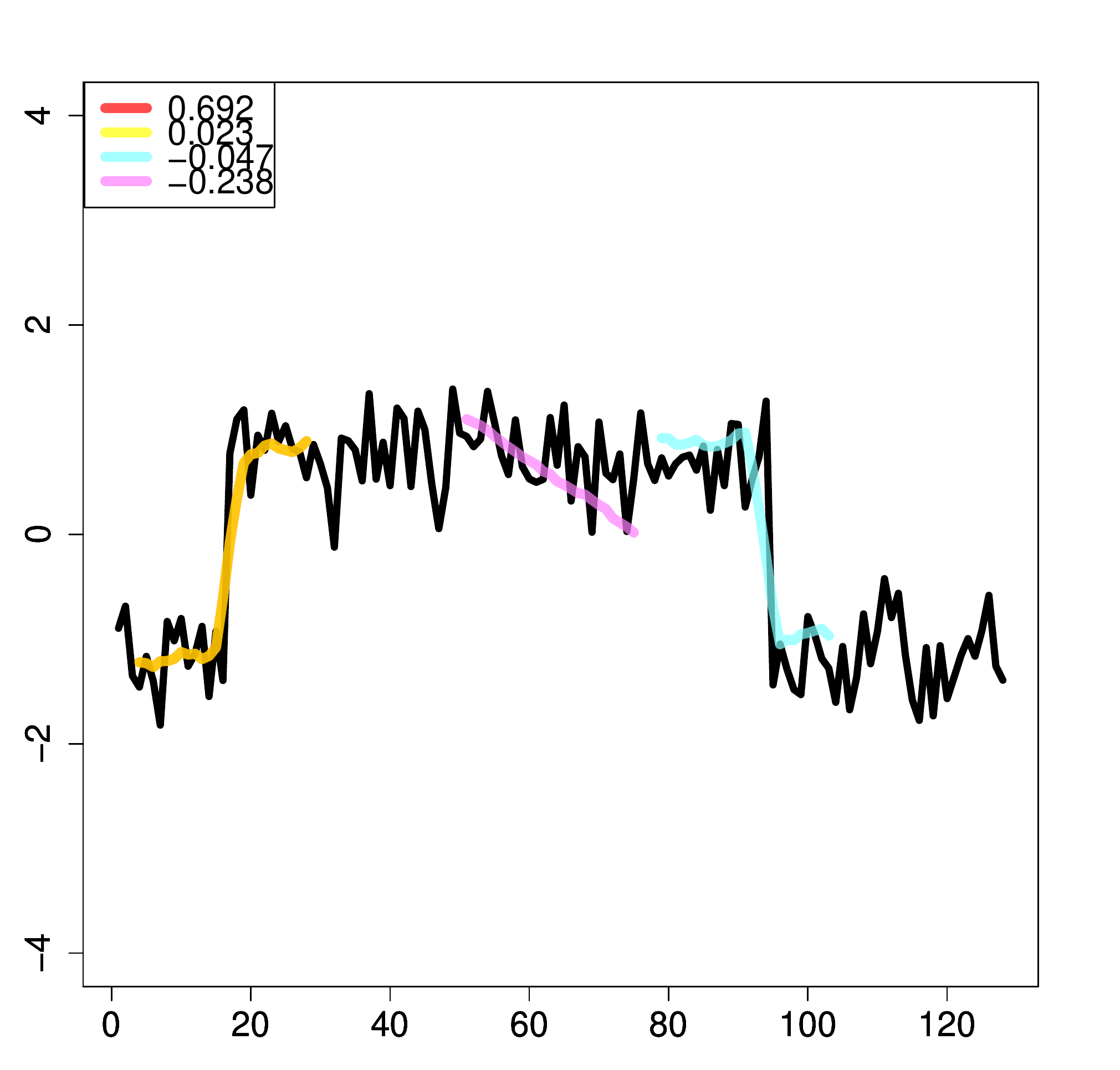}
  \\(cylinder) 
\end{minipage}
\begin{minipage}{0.5\hsize}
\centering
\includegraphics[width=50mm, height=45mm]{./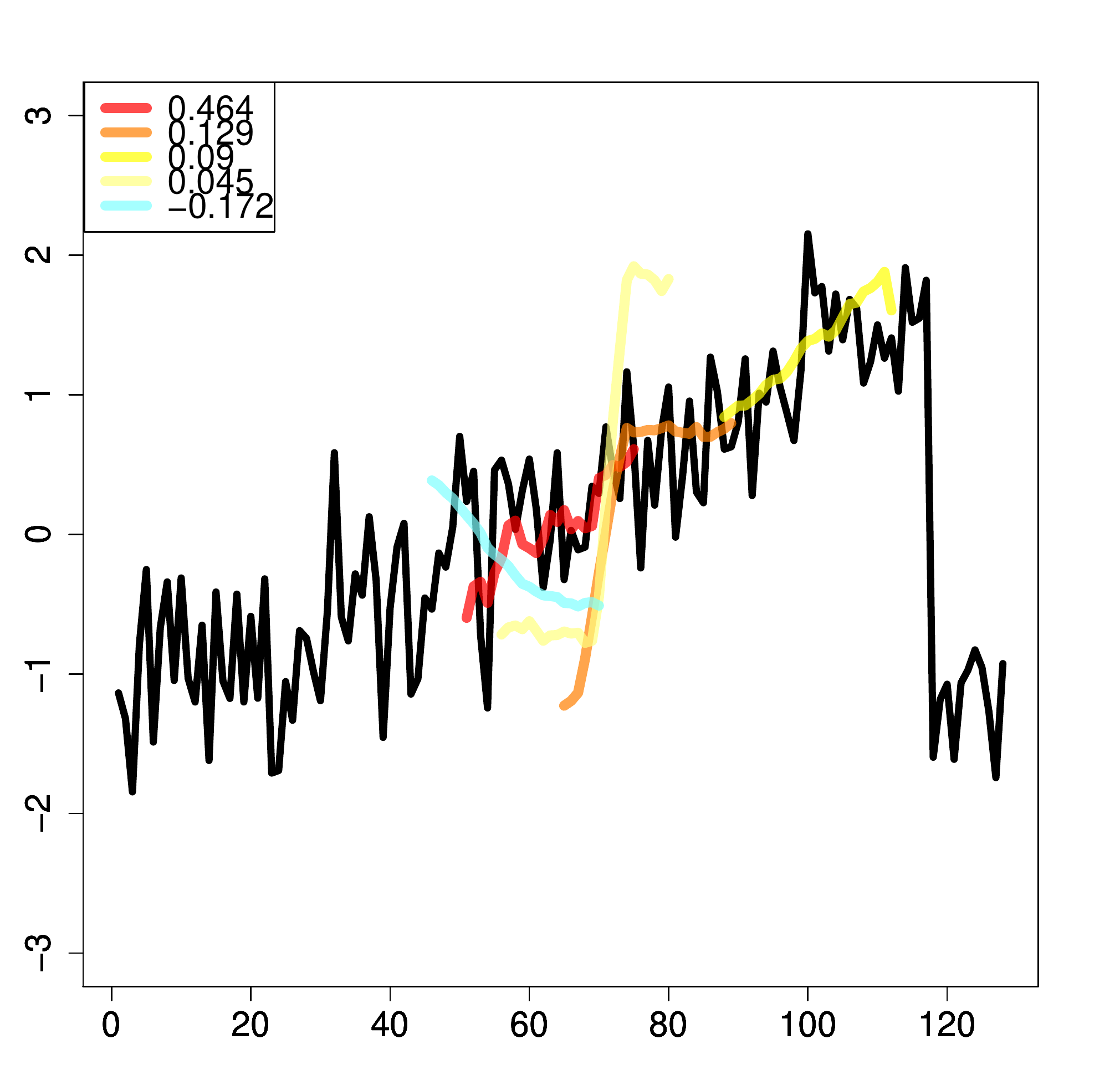}
        \hspace{1.6cm} (bell) 
\end{minipage}\\
\begin{minipage}{0.5\hsize}
\centering
\includegraphics[width=50mm, height=45mm]{./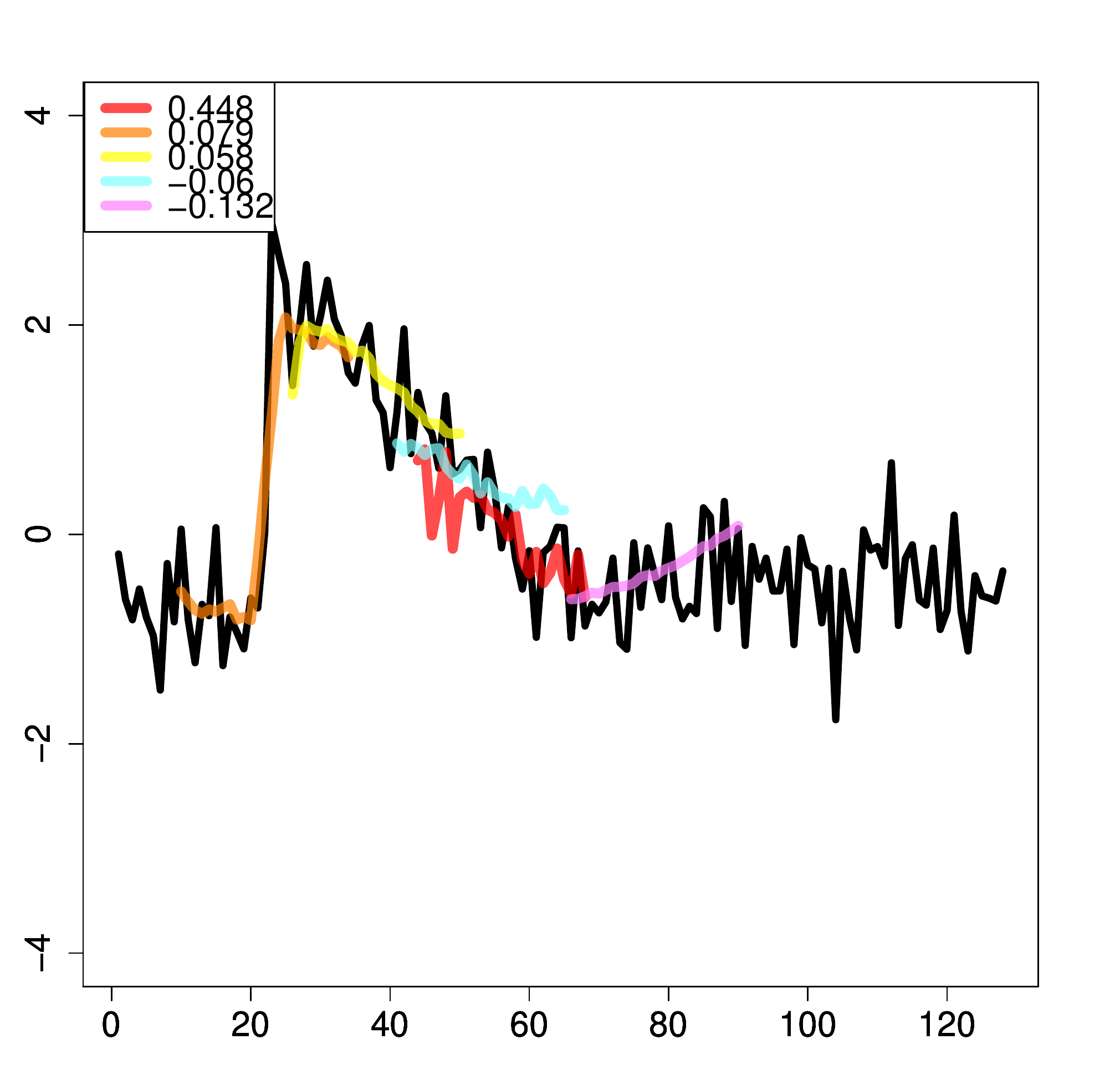}
        \hspace{1.6cm} (funnel) 
\end{minipage}
\end{tabular}
\caption{Examples of the visualization of shapelets for a CBF time-series data.
The colored lines show important patterns of the obtained classifier.
Positive values on the colored lines (red to yellow) 
 indicate the contribution rate for the positive class,
  and negative values (blue to purple) indicate the
  contribution rate for the negative class. \label{fig:cbf_shapelet}}
\end{figure}
\subsection{Results for Multiple-Instance Data}
We selected the baselines of MIL algorithms as
mi-SVM and MI-SVM~\citep{NIPS2002misvm}, and MILES~\citep{MI1normSVM}.
mi-SVM and MI-SVM are classical method in MIL, but still perform favorably
compared with state-of-the-art methods
for standard multiple-instance data \citep[see, e.g.,][]{doran:thesis}.
The details of the datasets are shown in Table~\ref{tab:mi_data}.
\par
\modiff{
mi- and MI-SVM find a single but an optimized shapelet ${\bf u}$
which is not limited to the instance in the training sample. 
The classifiers obtained by these algorithms are formulated as:
\begin{align}
\label{align:misvm}
g(B) = \max_{x \in B} \langle {\bf u}, \Phi(x) \rangle = \max_{x \in B}\sum_{z \in P_S} \alpha_{z} K (z, x).
\end{align}
MILES finds the multiple-shapelets, but they are limited to the instances in the training sample.
The classifier of MILES is formulated as follows:
\begin{align}
\label{align:miles}
g(B) = \sum_{z \in P_S} w_{z} \max_{x \in B}K(z, x).
\end{align}
}


We used the implementation provided by
Doran\footnote{https://github.com/garydoranjr/misvm}~for mi-SVM
and MI-SVM.
We combined the Gaussian kernel 
with mi-SVM and MI-SVM.
Parameter $C$ was chosen from $\{1, 10, \linebreak[0] 100, 1000,
10000\}$.\rmmodif{(remove the sentence here ``and parameter...'')}
For our method and MILES\footnote{MILES uses 1-norm SVM to obtain a
  final classifier. We implemented 1-norm SVM by using the formulation
  of~\cite{warmuth-etal:nips08}}, we chose $\nu$ from $\{0.5, \allowbreak 0.3, 0.2, 0.15, 0.1\}$,
and we only used the Gaussian kernel. 
Furthermore, we chose $\sigma$ from $\{0.005, \allowbreak 0.01, 0.05, 0.1, 0.5, 1.0 \}$.
We use $100$-means clustering with respect to each class to reduce $P_S$.
\modiff{
To avoid the randomness of $k$-means, 
we ran 30 times of training and selected the model which achieved
the best training accuracy.
}
For efficiency,
we demonstrated the weak learning problem \textsf{OP} 2.
For all these algorithms, we estimated optimal parameter set 
via 
5-fold cross-validation.
We used well-known multiple-instance data
as shown on the left-hand side of Table~\ref{tab:acc2}.
The accuracies resulted from 10 times of 5-fold cross-validation.

\begin{table*}[h!]
\centering
\caption{Details of MIL datasets. \label{tab:mi_data}}
\begin{tabular}{|c ||c|c|c| } \hline
dataset  & sample size& \# instances & dimension
 \\ \hline
MUSK1 &  $92$ & $476$ & $166$  
\\ 
MUSK2 & $102$& $6598$ & $166$ 
\\ 
elephant & $200$& $1391$ &$230$
\\ 
fox & $200$ & $1320$ & $230$
\\ 
tiger & $200$& $1220$ & $230$ 
\\ \hline
\end{tabular}
\end{table*}

\begin{table*}[h!]
\centering
\caption{Classification
  accuracies for MIL datasets. \label{tab:acc2}}
\begin{tabular}{|c ||c|c|c|c| } \hline
dataset & mi-SVM&
MI-SVM & MILES & 
Ours 
 \\ \hline
MUSK1 &  $0.834 \pm 0.084$ & 
$0.820 \pm 0.081$ & {$ \bf 0.865 \pm 0.068$}  &
$0.844 \pm 0.076$ 
\\ 
MUSK2 & $0.749 \pm 0.082$& 
$0.840 \pm 0.074$ & {$ 0.871 \pm 0.072$} &
{$\bf 0.879 \pm 0.067$} 
\\ 
elephant & $0.785 \pm 0.070$& 
$0.823 \pm 0.056$ &$0.796 \pm 0.068$&
{$\bf 0.828 \pm 0.061$}
\\ 
fox & $0.618 \pm 0.069$ & 
$0.578 \pm 0.075$ & ${\bf 0.675 \pm 0.071}$& 
{$  0.646 \pm 0.063$} 
\\ 
tiger & $0.752 \pm 0.078$& 
$0.815 \pm 0.055$ & ${\bf 0.827 \pm 0.057}$ & 
{$ 0.817 \pm 0.058$}  
\\ \hline
\end{tabular}
\end{table*}

\begin{table*}[h!]
\centering
\caption{Training
  accuracies for MIL datasets. \label{tab:acc3}}
\begin{tabular}{|c ||c|c| } \hline
dataset & MILES & 
Ours 
 \\ \hline
MUSK1 &  $0.987$ & 
$0.985$ 
\\ 
MUSK2 & $0.980$& 
 $0.993$
\\ 
elephant & $0.963$& 
$ 0.993$
\\ 
fox & $0.987$ & 
$  0.995$ 
\\ 
tiger & $0.973$& 
$ 0.993$
\\ \hline
\end{tabular}
\end{table*}


\begin{table*}[h!]
\centering
\caption{Training time (sec.) for MIL datasets. \label{tab:mi_traintime}}
\begin{tabular}{|c ||c|c|c|c| } \hline
dataset & mi-SVM&
MI-SVM & MILES & 
Ours 
 \\ \hline
MUSK1 &  $29.6$ & 
$28.1$ & $0.584$  &
$5.57$ 
\\ 
MUSK2 & $3760.1$& 
$3530.0$ & $103.1$ &
$80.5$ 
\\ 
elephant & $240.6$& 
$130.3$ &$5.84$&
$8.30$
\\ 
fox & $201.9$ & 
$139.2$ & $5.4$& 
$26.4$ 
\\ 
tiger & $158.5$& 
$118.0$ & $4.6$ & 
$9.8$  
\\ \hline
\end{tabular}
\end{table*}

\begin{table*}[h!]
\centering
\caption{Testing time (sec.) for MIL datasets. \label{tab:mi_testtime}}
\begin{tabular}{|c ||c|c|c|c| } \hline
dataset  & mi-SVM&
MI-SVM & MILES & 
Ours 
 \\ \hline
MUSK1 &  $0.010$ & 
$0.004$ & $0.011$  &
$0.045$ 
\\ 
MUSK2 & $0.577$& 
$0.063$ & $0.129$ &
$0.083$ 
\\ 
elephant & $0.053$& 
$0.015$ &$0.067$&
$0.115$
\\ 
fox & $0.078$ & 
$0.025$ & $0.118$& 
$0.145$ 
\\ 
tiger & $0.059$& 
$0.012$ & $0.065$ & 
$0.118$  
\\ \hline
\end{tabular}
\end{table*}
\modiff{
The results are shown in Table~\ref{tab:acc2}.
MILES and Ours achieve significantly better performance than
mi- and MI-SVM.
Ours achieves comparable performance to MILES.
Table~\ref{tab:acc3} shows the training accuracies of
MILES and Ours.
It can be seen that Ours achieves higher training accuracy.
This result is theoretically reasonable
because our hypothesis class is richer than MILES.
However, in other words, this means that Ours
has a higher overfitting risk than MILES.
}
\par
\modiff{
Table~\ref{tab:mi_traintime} shows that the training time of the five methods.
It is clear that MILES and Ours are more efficient than mi- and MI-SVM.
The main reason is that mi- and MI-SVM solve Quadratic Programming (QP) problem
while MILES and Ours solve LP problems.
MILES worked averagely more efficient than Ours. 
However, for MUSK2 which has a large number of instances,
Ours worked more efficiently than MILES.
}
\par
\modiff{
The testing time of each algorithm is shown in Table~\ref{tab:mi_testtime}.
We can see that Ours is
comparable to the other algorithms.
}

\section{Conclusion and Future Work}
\label{sec:conclusion}
We proposed a new MIL formulation that
provides a richer class of the final classifiers based 
on infinitely many shapelets.
We derived the tractable formulation
over infinitely many shapelets
with theoretical support, and provided an algorithm
based on LPBoost and DC (Difference of Convex) algorithm. 
Our result gives theoretical justification for some existing
shapelet-based classifiers
\citep[e.g.,][]{MI1normSVM,Hills:2014:CTS:2597434.2597448}. 
The experimental results demonstrate that 
the provided approach uniformly works for SL and MIL tasks
without introducing domain-specific parameters and heuristics, and
compares with the baselines of shapelet-based classifiers.

Especially for time-series classification, 
the number of instances usually becomes large.
Although we took 
a
heuristic approach in the experiment,
we think it is not an essential solution to improve the efficiency.
We preliminarily implemented \textsf{OP} 1 with
Orthogonal Random Features~\citep{NIPS2016_6246}
that can approximate the Gaussian kernel accurately.
It allows us to solve the primal problem of \textsf{OP} 1
directly, and allows us to avoid constructing 
a
large kernel matrix.
The implementation improved the efficiency drastically;
however, it did not achieve high accuracy 
as compared with solutions of \textsf{OP} 2 with the heuristics.
For SL tasks, there are many successful efficient methods
using some heuristics specialized in time-series domain~\citep{KeoghR13,renard:hal-01217435,GrabockaWS15,WistubaGS15,HouKZ16,Karlsson:2016}.
We will explore many ways to
improve efficiency for SL tasks.

Moreover,
we would like to 
improve the generalization error bound.
Our bound is still incomparable with the existing bound.
Since we think it requires to study more complex analysis,
we reserve this for future work.
Our heuristics might reduce the model complexity
(i.e., the risk of overfitting); 
however, we do not know how the complexity can be reduced by
our heuristics theoretically.
To apply our method to various domains, we would like to explore the
general techniques for reducing overfitting risk of our method.

\section*{Acknowledgement}
We would like to thank Prof. Eamonn Keogh and all the people who have contributed to the UCR time series classification archive.
This work is supported by JST CREST (Grant Number JPMJCR15K5)
and JSPS KAKENHI (Grant Number JP18K18001).
In the experiments, we used the computer resource offered
under the category of General Projects by Research Institute for
Information Technology, Kyushu University.
\bibliographystyle{apa}

\clearpage
\appendix
\section{Proof of Theorem~\ref{theo:represent}}
\label{sec:proof1}
First, we give a definition for convenience.
\begin{defi}
\label{def:theta}
{\rm [The set $\Theta$ of mappings from a bag to an instance]} \\ 
Given a sample $S=(B_1, \dots, B_m)$.
For any ${\bf u} \in U$, let $\theta_{{\bf u}, \Phi}: \{B_1, \ldots, B_m\} \to {\mathcal{X}}$ be a
mapping defined by
\[
\theta_{{\bf u}, \Phi}(B_i) := \arg\max_{x \in B_i} 
 \left\langle {\bf u},  \Phi\left(x\right)  \right\rangle, 
\]
and we define the set of all $\theta_{{\bf u}, \Phi}$ for $S$ as 
$\Theta_{S, \Phi} = \{\theta_{{\bf u}, \Phi} \mid {\bf u} \in U \}$.
For the sake of brevity, $\theta_{{\bf u}, \Phi}$ and $\Theta_{S, \Phi}$
will be abbreviated as $\theta_{{\bf u}}$ and $\Theta$, respectively.
\end{defi}
Below we give a proof of Theorem~\ref{theo:represent}.
 \begin{proof}
We can rewrite the optimization problem (\ref{align:WeakLearn_u})
by using $\theta \in \Theta$ as follows:
\begin{align}
 \max_{\theta \in \Theta}\max_{{\bf u} \in {\mathbb{H}}: \theta_{\bf u}=\theta} \quad& 
\sum_{i=1}^my_id_i
\left\langle {\bf u},  \Phi\left(\theta(B_i)\right)  \right\rangle \\
 \nonumber
\text{sub.to} \quad& \|{\bf u}\|_{\mathbb{H}}^2 \leq 1.
\end{align}
Thus, if we fix $\theta \in \Theta$, we have a sub-problem. Since
  the constraint $\theta=\theta_{\bf u}$ can be written as 
 the number $|P_S|$ of linear constraints 
(i.e., sub.to $\langle {\bf u}, \Phi(x) \rangle \leq \langle {\bf u},
\Phi(\theta(B_i)) \rangle \; (i \in [m], x \in B_i)$), 
each sub-problem is equivalent to a convex optimization.  
Indeed, each sub-problem can be written as the equivalent unconstrained
  minimization (by neglecting constants in the objective)
\begin{align*}
\min_{{\bf u} \in {\mathbb{H}}} ~&\beta \|{\bf u}\|^2_{\mathbb{H}} -
\sum_{i=1}^m\sum_{x \in B_i} \eta_{i, x} \left(
 \left\langle {\bf u}, \Phi\left(\theta(B_i)\right) \right\rangle -  
\left\langle {\bf u}, \Phi(x)  \right\rangle
\right) 
  -\sum_{i=1}^my_id_i
 \left\langle {\bf u},  \Phi\left(\theta(B_i)\right)  \right\rangle
\end{align*}
where $\beta$ and $\eta_{i,x}$ $(i \in [m], x \in B_i)$ are 
the corresponding positive constants.
Now for each sub-problem, we can apply the standard Representer Theorem
  argument~\citep[see, e.g., ][]{Mohri.et.al_FML}).
  Let ${\mathbb{H}}_1$ be the subspace $\{{\bf u} \in {\mathbb{H}} \mid
 {\bf u}=\sum_{z \in P_S} \alpha_{z}\Phi(z), \alpha_{z}\in \mathbb{R}\}$.
 We denote ${\bf u}_1$ as the orthogonal projection of ${\bf u}$ onto
 ${\mathbb{H}}_1$ and any ${\bf u}\in {\mathbb{H}}$ has the decomposition
 ${\bf u}={\bf u}_1 + {\bf u}^{\perp}$. Since ${\bf u}^{\perp}$ is orthogonal
 w.r.t. ${\mathbb{H}}_1$, $\|{\bf u}\|_{\mathbb{H}}^2=\|{\bf u}_1\|_{\mathbb{H}}^2 +
 \|{\bf u}^{\perp}\|_{\mathbb{H}}^2 \geq \|{\bf u}_1\|_{\mathbb{H}}^2$.
On the other hand, $\left\langle {\bf u},  \Phi\left(z \right)
 \right\rangle =\left\langle {\bf u}_1,  \Phi\left(z \right)  \right\rangle$.
 Therefore, the optimal solution of each sub-problem has to be contained
  in ${\mathbb{H}}_1$.
  This implies that the optimal solution, which is the maximum over all
  solutions of sub-problems, is contained in ${\mathbb{H}}_1$ as well.
 \end{proof}

\section{Proof of Theorem~\ref{theo:main}}
\label{proof2}
We use $\theta$ and $\Theta$ of Definition~\ref{def:theta}.

\begin{defi}
{\rm [The Rademacher and the Gaussian complexity~\citep{Bartlett:2003:RGC}]}\\
Given a sample $S=(x_1,\dots,x_m) \in {\mathcal{X}}^m$, 
the empirical Rademacher complexity ${\mathfrak{R}}(H)$ of a class $H \subset
 \{h: {\mathcal{X}} \to \mathbb{R}\}$ w.r.t.~$S$
 is defined as 
 ${\mathfrak{R}}_S(H)=\frac{1}{m}\mathop{\rm  E}\limits_{{\boldsymbol{\sigma}}}\left[
\sup_{h \in H}\sum_{i=1}^m \sigma_i h(x_i)
 \right]$,
 where ${\boldsymbol{\sigma}} \in \{-1,1\}^m$ and each $\sigma_i$ is an independent
 uniform random variable in $\{-1,1\}$.
 The empirical Gaussian complexity ${\mathfrak{G}}_S(H)$ of $H$ w.r.t.~$S$
 is defined similarly but
 each $\sigma_i$ is drawn independently from the standard normal distribution.
\end{defi} 

The following bounds are well-known. 
\begin{lemm}
 \label{lemm:RC_and_GC}
{\rm [Lemma 4 of~\citep{Bartlett:2003:RGC}]}
 ${\mathfrak{R}}_S(H) =O({\mathfrak{G}}_S(H))$.
\end{lemm}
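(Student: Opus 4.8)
The plan is to establish the one-sided comparison ${\mathfrak{G}}_S(H) \geq \sqrt{2/\pi}\,{\mathfrak{R}}_S(H)$, which is precisely the assertion ${\mathfrak{R}}_S(H) = O({\mathfrak{G}}_S(H))$ with explicit constant $\sqrt{\pi/2}$. The starting point is the elementary fact that a standard normal variable $g$ factors as $g = \sigma\,|g|$, where its sign $\sigma = {\mathrm{sign}}(g)$ is a Rademacher variable that is \emph{independent} of its magnitude $|g|$. First I would draw $g_1,\dots,g_m$ i.i.d.\ standard normal, write each $g_i = \sigma_i |g_i|$, and rewrite the empirical Gaussian complexity as an iterated expectation, outer over the signs ${\boldsymbol{\sigma}}$ and inner over the magnitudes $|{\boldsymbol{g}}|$.

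The key step is then to push the expectation over the magnitudes inside the supremum over $h \in H$. Because $(|g_1|,\dots,|g_m|) \mapsto \sup_{h \in H} \sum_i \sigma_i |g_i| h(x_i)$ is a supremum of linear functions and hence convex, the inequality $\mathbb{E}[\sup_h(\cdot)] \geq \sup_h \mathbb{E}[(\cdot)]$ holds (for each fixed $h_0$ the expected supremum dominates the expectation of the $h_0$-term, so take the supremum over $h_0$). This yields, for every fixed sign pattern ${\boldsymbol{\sigma}}$,
\[
\mathbb{E}_{|{\boldsymbol{g}}|}\Bigl[\sup_{h \in H}\sum_{i=1}^m \sigma_i |g_i| h(x_i)\Bigr]
 \geq \sup_{h \in H}\sum_{i=1}^m \sigma_i\,\mathbb{E}[|g_i|]\,h(x_i)
 = \sqrt{\tfrac{2}{\pi}}\,\sup_{h \in H}\sum_{i=1}^m \sigma_i h(x_i),
\]
where I use $\mathbb{E}|g_i| = \sqrt{2/\pi}$. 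Taking the outer expectation over ${\boldsymbol{\sigma}}$ and dividing by $m$ gives ${\mathfrak{G}}_S(H) \geq \sqrt{2/\pi}\,{\mathfrak{R}}_S(H)$, which is the claim.

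The whole argument reduces to two ingredients: the sign/magnitude independence of a Gaussian, and applying the Jensen-type inequality in the correct direction. The only point needing care is exactly that direction — one must use $\mathbb{E}[\sup] \geq \sup[\mathbb{E}]$ (this is what collapses the magnitude average to the constant $\sqrt{2/\pi}$) rather than the opposite, and one must verify that the signs are genuinely independent of the magnitudes, so that conditioning on ${\boldsymbol{\sigma}}$ leaves the law of each $|g_i|$ unchanged. Beyond this bookkeeping there is no real obstacle; the result is a short classical computation rather than a deep estimate.
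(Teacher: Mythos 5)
Your argument is correct: decomposing each Gaussian as $g_i=\sigma_i\lvert g_i\rvert$ with sign independent of magnitude, and then using the elementary inequality $\mathbb{E}[\sup_h(\cdot)]\geq\sup_h\mathbb{E}[(\cdot)]$ to collapse the magnitudes to $\mathbb{E}\lvert g_i\rvert=\sqrt{2/\pi}$, gives ${\mathfrak{R}}_S(H)\leq\sqrt{\pi/2}\,{\mathfrak{G}}_S(H)$ as claimed. The paper itself offers no proof — it simply cites Lemma 4 of Bartlett and Mendelson — and your derivation is essentially the standard one from that reference (note only that the convexity remark is unnecessary; the inequality you need is just the trivial direction of exchanging supremum and expectation, which you also justify correctly).
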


\begin{lemm}
\label{lemm:ensemble_margin_bound}
{\rm [Corollary 6.1 of~\citep{Mohri.et.al_FML}]} 
For fixed $\rho$, $\delta >0$, 
the following bound holds with probability at least $1- \delta$:
for all $f \in {\mathrm{conv}(H)}$,
\[
{\mathcal{E}}_D(f) \leq {\mathcal{E}}_{\rho}(f) + \frac{2}{\rho} {\mathfrak{R}}_S(H) + 3 \sqrt{\frac{\log\frac{1}{\delta}}{2m}}.
\]
\end{lemm}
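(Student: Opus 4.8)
The plan is to follow the classical margin-based argument for convex ensembles: reduce the $0/1$ generalization error to a Lipschitz surrogate, invoke a standard Rademacher uniform-deviation bound, and then exploit the fact that convexification leaves the Rademacher complexity unchanged. First I would introduce the $\rho$-margin (ramp) loss $\phi_\rho:\mathbb{R}\to[0,1]$, equal to $1$ on $(-\infty,0]$, equal to $0$ on $[\rho,\infty)$, and interpolating linearly in between, so that $\phi_\rho$ is $1/\rho$-Lipschitz and sandwiches the indicators, $\mathbbm{1}[t\le 0]\le \phi_\rho(t)\le \mathbbm{1}[t<\rho]$. Applying these with $t=y f(B)$ gives ${\mathcal{E}}_D(f)\le \mathop{\rm E}_{(B,y)\sim D}[\phi_\rho(yf(B))]$ and $\frac{1}{m}\sum_{i=1}^m \phi_\rho(y_i f(B_i))\le {\mathcal{E}}_{\rho}(f)$, so it suffices to control the expected surrogate loss by its empirical average uniformly over ${\mathrm{conv}}(H)$.

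Next I would invoke the standard Rademacher-based uniform deviation bound (the empirical-complexity version, e.g. Theorem 3.3 of the same reference) applied to the surrogate class $\{(B,y)\mapsto \phi_\rho(yf(B)) : f\in {\mathrm{conv}}(H)\}$, whose range lies in $[0,1]$. This yields, with probability at least $1-\delta$ and simultaneously over all $f\in{\mathrm{conv}}(H)$,
\[
\mathop{\rm E}[\phi_\rho(yf(B))] \le \frac{1}{m}\sum_{i=1}^m \phi_\rho(y_i f(B_i)) + 2\,{\mathfrak{R}}_S(\phi_\rho\circ{\mathrm{conv}}(H)) + 3\sqrt{\frac{\log\frac{1}{\delta}}{2m}}.
\]
Chaining this with the two sandwich inequalities of the first step already produces the stated form, provided the complexity term ${\mathfrak{R}}_S(\phi_\rho\circ{\mathrm{conv}}(H))$ can be reduced to $\tfrac{1}{\rho}{\mathfrak{R}}_S(H)$.

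The two remaining reductions are the heart of the argument. First, since $\phi_\rho$ is $1/\rho$-Lipschitz (and after absorbing the constant $\phi_\rho(0)$, which does not change the empirical Rademacher complexity), Talagrand's contraction lemma gives ${\mathfrak{R}}_S(\phi_\rho\circ{\mathrm{conv}}(H))\le \frac{1}{\rho}{\mathfrak{R}}_S(\{(B,y)\mapsto yf(B):f\in{\mathrm{conv}}(H)\})$; the factor $y_i\in\{-1,+1\}$ may then be dropped because $\sigma_i y_i$ has the same distribution as $\sigma_i$, so this equals ${\mathfrak{R}}_S({\mathrm{conv}}(H))$. Second, and crucially, ${\mathfrak{R}}_S({\mathrm{conv}}(H))={\mathfrak{R}}_S(H)$: for a fixed draw ${\boldsymbol{\sigma}}$ the map $f\mapsto \sum_{i} \sigma_i f(B_i)$ is linear, so its supremum over the convex hull is attained at an extreme point, i.e. at some $h\in H$. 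The hard part will be making the contraction step fully rigorous, since Talagrand's lemma in its cleanest scalar form applies to a single Lipschitz function composed with a real-valued class, and one must verify that the common factor $y_i$ and the common $\phi_\rho$ across all coordinates are covered; this is routine but is where the technical care lies. Assembling these pieces yields exactly the claimed inequality.
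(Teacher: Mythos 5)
Your proposal is correct and is essentially the argument behind the result as the paper uses it: the paper does not prove this lemma but cites it as Corollary 6.1 of \citet{Mohri.et.al_FML}, whose proof is exactly your route --- ramp-loss sandwich, the empirical-Rademacher uniform deviation bound for $[0,1]$-valued classes, Talagrand contraction with the $1/\rho$-Lipschitz surrogate (absorbing $\phi_\rho(0)$ and the labels $y_i$ via $\sigma_i y_i \sim \sigma_i$), and the identity ${\mathfrak{R}}_S({\mathrm{conv}}(H)) = {\mathfrak{R}}_S(H)$ from linearity of the supremand. The technical points you flag (constants under contraction, handling the common factor $y_i$) are handled exactly as you describe, so there is no gap.
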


To derive generalization bound based on the Rademacher or the Gaussian
complexity is quite standard in the statistical learning theory
literature and applicable to our classes of interest as well.
However, a standard analysis provides us sub-optimal bounds. 

\begin{lemm}
\label{lemm:GC}
Suppose that for any $z \in {\mathcal{X}}$, $\|\Phi(z)\|_{\mathbb{H}} \leq R$.
Then, the empirical Gaussian complexity of $H_U$ with respect to $S$
for $U \subseteq \{{\bf u} \mid \|{\bf u}\|_{\mathbb{H}} \leq 1\}$
is bounded as follows: 
\[
{\mathfrak{G}}_{S}(H) \leq \frac{R\sqrt{(\sqrt{2}-1)+ 2(\ln|\Theta|)}}{\sqrt{m}}.
\]
\end{lemm}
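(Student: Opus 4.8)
The plan is to first linearize the inner maximum $\max_{x\in B_i}\langle{\bf u},\Phi(x)\rangle$ using the maximizer maps of Definition~\ref{def:theta}, thereby collapsing the supremum over the \emph{infinite} set $U$ into a finite maximum over $\Theta$, and only then to apply a maximal inequality for Gaussian averages. Concretely, for every ${\bf u}$ we have $h_{\bf u}(B_i)=\langle{\bf u},\Phi(\theta_{\bf u}(B_i))\rangle$ with $\theta_{\bf u}\in\Theta$, so
\[
\sup_{{\bf u}\in U}\sum_{i=1}^m\sigma_i h_{\bf u}(B_i)\le\max_{\theta\in\Theta}\sup_{{\bf u}\in U}\Big\langle{\bf u},\sum_{i=1}^m\sigma_i\Phi(\theta(B_i))\Big\rangle\le\max_{\theta\in\Theta}\Big\|\sum_{i=1}^m\sigma_i\Phi(\theta(B_i))\Big\|_{\mathbb H},
\]
where the last step is Cauchy--Schwarz together with $\|{\bf u}\|_{\mathbb H}\le1$. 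Writing $Z_\theta=\sum_i\sigma_i\Phi(\theta(B_i))$ and $Y_\theta=\|Z_\theta\|_{\mathbb H}$, it remains to bound $\tfrac1m\,\mathbb{E}_{\boldsymbol{\sigma}}[\max_{\theta\in\Theta}Y_\theta]$.

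Second, I would control each $Y_\theta$ in mean and in fluctuation. Since the $\sigma_i$ are independent standard Gaussians, $\mathbb{E}[Y_\theta^2]=\sum_i\|\Phi(\theta(B_i))\|_{\mathbb H}^2\le mR^2$, hence $\mathbb{E}[Y_\theta]\le R\sqrt m$. Moreover $\boldsymbol{\sigma}\mapsto Y_\theta$ is $R\sqrt m$-Lipschitz: its squared gradient norm equals $\|Z_\theta\|^{-2}\sum_i\langle Z_\theta,\Phi(\theta(B_i))\rangle^2\le\sum_i\|\Phi(\theta(B_i))\|_{\mathbb H}^2\le mR^2$, so by Gaussian concentration each $Y_\theta$ is $R\sqrt m$-sub-Gaussian about its mean.

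Third, I would assemble a maximal inequality over the finite index set $\Theta$ by the exponential-moment (Chernoff/union) method: for every $\beta>0$,
\[
\exp\!\big(\beta\,\mathbb{E}[\max_\theta Y_\theta]\big)\le\mathbb{E}\big[\max_\theta e^{\beta Y_\theta}\big]\le\sum_{\theta\in\Theta}\mathbb{E}\big[e^{\beta Y_\theta}\big],
\]
so that $\mathbb{E}[\max_\theta Y_\theta]\le\beta^{-1}\big(\ln|\Theta|+\ln\max_\theta\mathbb{E}[e^{\beta Y_\theta}]\big)$. The remaining ingredient is an estimate of the form $\mathbb{E}[e^{\beta Y_\theta}]\le C\,e^{\beta^2mR^2/2}$, extracted from the quadratic-form structure of $Y_\theta^2=\boldsymbol{\sigma}^{\!\top}G_\theta\boldsymbol{\sigma}$ (equivalently from its chi-square moment generating function $\prod_k(1-2\beta\lambda_k)^{-1/2}$, using $\sum_k\lambda_k=\mathrm{tr}(G_\theta)\le mR^2$). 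Optimizing $\beta$ then gives $\mathbb{E}[\max_\theta Y_\theta]\le R\sqrt m\,\sqrt{2\ln C+2\ln|\Theta|}$, and dividing by $m$ produces the stated shape, with the additive constant $(\sqrt2-1)$ arising from the precise value of $C$ in the exponential-moment estimate.

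I expect the main obstacle to be exactly this last quantitative step. A careless maximal inequality produces either a spurious cross term --- pulling the mean outside gives only $R\sqrt m\,(1+\sqrt{2\ln|\Theta|})$, whose square carries an extra $2\sqrt{2\ln|\Theta|}$ --- or an inflated leading factor, since passing through $\mathbb{E}[\max_\theta Y_\theta^2]$ and applying Jensen forces a coefficient strictly larger than $2$ on $\ln|\Theta|$ (the chi-square MGF blows up as the free parameter approaches the boundary). Obtaining both the leading term $2\ln|\Theta|$ \emph{and} a small additive constant \emph{under a single square root} requires keeping the mean contribution inside the exponent, i.e.\ a tight bound $\mathbb{E}[e^{\beta Y_\theta}]\le C\,e^{\beta^2mR^2/2}$ with the best admissible $C$; pinning this down is the delicate part. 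By contrast, the conceptual move of reducing the uncountable supremum over $U$ to the finite maximum over $\Theta$ (Definition~\ref{def:theta}) is what makes any union/Chernoff argument available at all, and is the linchpin on which the rest of the estimate rests.
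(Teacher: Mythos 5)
Your first step --- collapsing the supremum over the infinite ball $U$ to a finite maximum over $\Theta$ via the maximizer maps of Definition~\ref{def:theta} and then applying Cauchy--Schwarz with $\|{\bf u}\|_{\mathbb{H}}\le 1$ --- is exactly the paper's opening move, and you are right that it is the linchpin. The divergence, and the gap, is in how the maximal inequality over $\Theta$ is then executed. The paper never works with $Y_\theta=\|Z_\theta\|_{\mathbb{H}}$ directly: it first applies Jensen's inequality to pass to $\frac1m\sqrt{\mathbb{E}_{\boldsymbol{\sigma}}\sup_{\theta}\|Z_\theta\|_{\mathbb{H}}^2}$, and then runs the exponential-moment/union argument on the quadratic form $Q_\theta={\boldsymbol{\sigma}}^\top{\boldsymbol{\mathrm{K}}}^{(\theta)}{\boldsymbol{\sigma}}$ itself, whose moment generating function is explicit after diagonalization. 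The constants then come from the elementary inequalities $(1-x)^{-1/2}\le 1+2(\sqrt2-1)x$ on $[0,\tfrac12]$ and $\ln(1+x)\le x$, together with $\mathrm{tr}({\boldsymbol{\mathrm{K}}}^{(\theta)})\le mR^2$ and $\lambda_1^{(\theta)}\le mR^2$, producing $(\sqrt2-1)mR^2+2mR^2\ln|\Theta|$ under the square root with no cross term ever arising.

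Your route instead hinges on the estimate $\mathbb{E}[e^{\beta Y_\theta}]\le C\,e^{\beta^2 mR^2/2}$ with $2\ln C=\sqrt2-1$, which you correctly identify as the delicate step but do not supply --- and in fact it cannot be supplied. By Jensen, $\mathbb{E}[e^{\beta Y_\theta}]\ge e^{\beta\,\mathbb{E}[Y_\theta]}$; taking ${\boldsymbol{\mathrm{K}}}^{(\theta)}=R^2 I_m$ (i.e.\ the $\Phi(\theta(B_i))$ orthogonal of norm $R$) gives $Y_\theta=R\|{\boldsymbol{\sigma}}\|_2$ with $\mathbb{E}[Y_\theta]=(1-o(1))R\sqrt m$, and evaluating the required inequality at $\beta=1/(R\sqrt m)$ forces $\ln C\ge \tfrac12-o(1)$, i.e.\ $2\ln C\ge 1>\sqrt2-1$. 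So the MGF of $Y_\theta$ (as opposed to $Y_\theta^2$) is structurally unable to reproduce the stated additive constant; what your mean-plus-Lipschitz-concentration decomposition actually yields is the weaker $R\sqrt m\,(1+\sqrt{2\ln|\Theta|})$ that you yourself label careless (which, to be fair, differs from the lemma only in the additive constant and is asymptotically equivalent). Your parenthetical worry that going through $\mathbb{E}[\max_\theta Y_\theta^2]$ "forces a coefficient strictly larger than $2$" is not what happens in the paper: there the coefficient of $\ln|\Theta|$ is $1/c=2\lambda_1^{(\theta)}\le 2mR^2$, and the boundary blow-up of the per-eigenvalue factors is neutralized because, after $\ln(1+x)\le x$, only the trace enters the remaining term. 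To close the gap, either switch to the squared quantity and the quadratic-form MGF as the paper does, or state and prove the weaker constant your method actually delivers.
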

\begin{proof} 
Since $U$ can be partitioned into
	$\bigcup_{\theta \in \Theta} \{ {\bf u} \in U \mid \theta_{\bf u} = \theta\}$,
\begin{align}
\label{eq:gauss1}
\nonumber
{\mathfrak{G}}_{S}(H_U) &= \frac{1}{m} \mathop{\rm  E}\limits_{{\boldsymbol{\sigma}}} \left[\sup_{\theta \in \Theta} 
\sup_{{\bf u} \in U:\theta_{{\bf u}}=\theta} \sum_{i=1}^m \sigma_i
\left\langle {\bf u}, \Phi\left(\theta(B_i)\right)
             \right\rangle\right] \\ \nonumber
&= \frac{1}{m} \mathop{\rm  E}\limits_{{\boldsymbol{\sigma}}} \left[\sup_{\theta \in \Theta} 
\sup_{{\bf u} \in U:\theta_{{\bf u}}=\theta} \left\langle {\bf u},
 \left(\sum_{i=1}^m \sigma_i \Phi\left(\theta(B_i)\right)
  \right) \right \rangle \right] \\ \nonumber
&\leq \frac{1}{m} \mathop{\rm  E}\limits_{{\boldsymbol{\sigma}}} \left[\sup_{\theta \in \Theta} 
\sup_{{\bf u} \in U} \left\langle {\bf u}, 
\left(\sum_{i=1}^m \sigma_i \Phi\left(\theta(B_i)\right)
  \right) \right \rangle \right] \\ \nonumber
  &\leq \frac{1}{m} \mathop{\rm  E}\limits_{{\boldsymbol{\sigma}}} \left[
  \sup_{\theta \in \Theta} \left\| \sum_{i=1}^m \sigma_i \Phi \left(
  \theta(B_i)\right) \right\|_{\mathbb{H}}
  \right]\\ \nonumber
  &= \frac{1}{m} \mathop{\rm  E}\limits_{{\boldsymbol{\sigma}}} \left[
  \sup_{\theta \in \Theta} \sqrt{\left\|\sum_{i=1}^m \sigma_i
  \Phi\left(\theta(B_i)\right)\right\|_{\mathbb{H}}^2}
  \right] \\ \nonumber
  &= \frac{1}{m} \mathop{\rm  E}\limits_{{\boldsymbol{\sigma}}} \left[
  \sqrt{\sup_{\theta \in \Theta} \left\|\sum_{i=1}^m \sigma_i
  \Phi\left(\theta(B_i)\right)\right\|_{\mathbb{H}}^2}
  \right]\\ 
& \leq \frac{1}{m} \sqrt{ 
  \mathop{\rm  E}\limits_{{\boldsymbol{\sigma}}} \left[
  \sup_{\theta \in \Theta} \left\|\sum_{i=1}^m \sigma_i
  \Phi\left(\theta(B_i)\right)\right\|_{\mathbb{H}}^2
  \right]}.
\end{align}
The first inequality is derived from the relaxation of ${\bf u}$,
the second inequality is due to Cauchy-Schwarz inequality and
the fact $\|{\bf u}\|_{{\mathbb{H}}} \leq 1$, and
the last inequality is due to Jensen's inequality.
We denote by ${\boldsymbol{\mathrm{K}}}^{(\theta)}$ the kernel matrix such that 
${\boldsymbol{\mathrm{K}}}_{ij}^{(\theta)} = \langle \Phi((\theta(B_i)),
\Phi(\theta(B_j)) \rangle$.
 Then, we have
 \begin{align}
  \mathop{\rm  E}\limits_{{\boldsymbol{\sigma}}} \left[
  \sup_{\theta \in \Theta} \left\|\sum_{i=1}^m \sigma_i
  \Phi\left(\theta(B_i)\right)\right\|_{\mathbb{H}}^2
  \right]
  =
  \mathop{\rm  E}\limits_{{\boldsymbol{\sigma}}}\left[ \sup_{\theta \in \Theta}  
\sum_{i,j=1}^m \sigma_i\sigma_j 
{\boldsymbol{\mathrm{K}}}_{ij}^{(\theta)}
\right].
 \end{align}
We now derive an upper bound of the r.h.s. as follows.
 
For any $c>0$, 
\begin{align*}
&\exp\left(
c \mathop{\rm  E}\limits_{{\boldsymbol{\sigma}}}\left[ \sup_{\theta \in \Theta}  
\sum_{i,j=1}^m \sigma_i\sigma_j 
{\boldsymbol{\mathrm{K}}}_{ij}^{(\theta)}
\right]
\right) \\
&\leq 
\mathop{\rm  E}\limits_{{\boldsymbol{\sigma}}}\left[\exp\left(
c \sup_{\theta \in \Theta}  
\sum_{i,j=1}^m \sigma_i\sigma_j 
{\boldsymbol{\mathrm{K}}}_{ij}^{(\theta)}
\right)
\right] \\
= &
\mathop{\rm  E}\limits_{{\boldsymbol{\sigma}}}\left[
\sup_{\theta \in \Theta}  
\exp\left(c
\sum_{i,j=1}^m \sigma_i\sigma_j 
{\boldsymbol{\mathrm{K}}}_{ij}^{(\theta)}
\right)
\right] \\
&\leq 
\sum_{\theta \in \Theta}
\mathop{\rm  E}\limits_{{\boldsymbol{\sigma}}}\left[
\exp\left(c
\sum_{i,j=1}^m \sigma_i\sigma_j 
{\boldsymbol{\mathrm{K}}}_{ij}^{(\theta)}
\right)
\right]
\end{align*}
The first inequality is due to Jensen's inequality, and
the second inequality is due to the fact that the supremum is
bounded by the sum.
By using the symmetry property of
${\boldsymbol{\mathrm{K}}}^{(\theta)}$, we have
$\sum_{i,j=1}^m\sigma_i\sigma_j{\boldsymbol{\mathrm{K}}}_{ij}^{(\theta)}={\boldsymbol{\sigma}}^\top{\boldsymbol{\mathrm{K}}}^{(\theta)}{\boldsymbol{\sigma}}$,
which is rewritten as
\[
\top{{\boldsymbol{\sigma}}}{\boldsymbol{\mathrm{K}}}^{(\theta)}{\boldsymbol{\sigma}} = \top{(\top{{\boldsymbol{\mathrm{V}}}}{\boldsymbol{\sigma}})}
\left( \begin{array}{ccc}
\lambda_1^{(\theta)} & \hfill & 0  \\
\hfill & \ddots & \hfill  \\
0 & \hfill & \lambda_m^{(\theta)} \\
\end{array} \right)
\top{{\boldsymbol{\mathrm{V}}}}{\boldsymbol{\sigma}},
\]
where
$\lambda_1^{(\theta)}\geq \dots \geq \lambda_m^{(\theta)}\geq 0$ are the
eigenvalues of ${\boldsymbol{\mathrm{K}}}^{(\theta)}$ and 
${\boldsymbol{\mathrm{V}}} = ({{\bf v}}_1, \ldots, {{\bf v}}_m)$
is the orthonormal matrix such that ${{\bf v}}_i$ is the eigenvector
that corresponds to the eigenvalue $\lambda_i$.
By the reproductive property of Gaussian distribution,
$\top{{\boldsymbol{\mathrm{V}}}} {\boldsymbol{\sigma}}$ obeys the same Gaussian distribution as well.
So,
\begin{align*}
&\sum_{\theta \in \Theta}
\mathop{\rm  E}\limits_{{\boldsymbol{\sigma}}}\left[
\exp\left(c
\sum_{i,j=1}^m \sigma_i\sigma_j 
{\boldsymbol{\mathrm{K}}}_{ij}^{(\theta)}
\right)
\right] \\
=&
\sum_{\theta \in \Theta}
\mathop{\rm  E}\limits_{{\boldsymbol{\sigma}}}\left[
\exp\left(c
\top{{\boldsymbol{\sigma}}}{\boldsymbol{\mathrm{K}}}^{(\theta)}{\boldsymbol{\sigma}}
\right)
\right] \\
= &
\sum_{\theta \in \Theta}
\mathop{\rm  E}\limits_{{\boldsymbol{\sigma}}}\left[
\exp\left(c
\sum_{k=1}^m\lambda_k^{(\theta)}(\top{{{\bf v}}_k}{\boldsymbol{\sigma}})^2
\right)
\right] \\
= &
\sum_{\theta \in \Theta}
\Pi_{k=1}^m
\mathop{\rm  E}\limits_{\sigma_k}\left[
\exp\left(c
\lambda_k^{(\theta)}\sigma_k^2
\right)
\right] ~~(\text{replace}~{\boldsymbol{\sigma}} = \top{{{\bf v}}_k}{\boldsymbol{\sigma}})\\
 = &
 \sum_{\theta \in \Theta}
\Pi_{k=1}^m
 \left(
 \int_{-\infty}^{\infty}
 \exp\left(
 c \lambda_k^{(\theta)}\sigma^2
 \right)
  \frac{\exp(-\sigma^2)}{\sqrt{2\pi}}d\sigma
\right) \\
 = &
\sum_{\theta \in \Theta}
\Pi_{k=1}^m
 \left(
  \int_{-\infty}^{\infty}
  \frac{\exp(-(1-c\lambda_k^{(\theta)})\sigma^2)}{\sqrt{2\pi}}d\sigma\right).
\end{align*}
Now we replace $\sigma$ by $\sigma' =
\sqrt{1-c\lambda_k^{(\theta)}}\sigma$. Since
$d\sigma'=\sqrt{1-c\lambda_k^{(\theta)}}d\sigma$,
we have:
\begin{align*}
&\int_{-\infty}^{\infty}
\frac{\exp(-(1-c\lambda_k^{(\theta)})\sigma^2)}{\sqrt{2}\pi}d\sigma 
= \frac{1}{\sqrt{2\pi}} \int_{-\infty}^{\infty} 
\frac{\exp(-\sigma'^2)}{\sqrt{1-c\lambda_k^{(\theta)}}}d\sigma' \\
= &\frac{1}{\sqrt{1-c\lambda_k^{(\theta)}}}.
\end{align*}
 Now,  applying the inequality that $\frac{1}{\sqrt{1-x}} \leq
 1+2(\sqrt{2}-1)x$ for  $0 \leq x \leq \frac{1}{2}$, 
 the bound becomes
 \begin{align}
\label{align:ineq1}
\nonumber
 &\exp\left(
c \mathop{\rm  E}\limits_{{\boldsymbol{\sigma}}}\left[ \sup_{\theta \in \Theta}  
\sum_{i,j=1}^m \sigma_i\sigma_j 
{\boldsymbol{\mathrm{K}}}_{ij}^{(\theta)}
\right]
  \right) \\
\leq
&\sum_{\theta \in \Theta}
\Pi_{k=1}^m
  \left(
1+2(\sqrt{2}-1)c\lambda_k^{(\theta)} + 2\lambda_1
  \right).
 \end{align}
Further, taking logarithm, dividing the both sides by $c$,
letting $c=\frac{1}{2
  \max_{k}\lambda_k^{(\theta)}}=1/(2\lambda_1^{(\theta)})$,
fix $\theta = \theta^*$ such that $\theta^*$ maximizes (\ref{align:ineq1}),
 and applying
 $\ln(1+x) \leq x$, we get:
\begin{align}
\label{eq:gauss2}
\nonumber
&\mathop{\rm  E}\limits_{{\boldsymbol{\sigma}}}\left[ \sup_{\theta \in \Theta}  
\sum_{i,j=1}^m \sigma_i\sigma_j 
{\boldsymbol{\mathrm{K}}}_{ij}^{(\theta^*)}
\right]
\\ \nonumber
&\leq
 (\sqrt{2}-1)\sum_{k=1}^m \lambda_k^{(\theta^*)}  + 2\lambda_1^{(\theta^*)} \ln |\Theta|\\
\nonumber
 &=
 (\sqrt{2}-1)\top{({\boldsymbol{\mathrm{K}}}^{(\theta^*)})} + 2\lambda_1^{(\theta^*)} \ln |\Theta|\\
 &\leq
 (\sqrt{2}-1)m R^2 + 2 mR^2 \ln |\Theta|,
\end{align}
 where the last inequality holds since $\lambda_1^{(\theta^*)}=\|{\boldsymbol{\mathrm{K}}}^{(\theta^*)}\|_2 \leq m
 \|{\boldsymbol{\mathrm{K}}}^{(\theta)}\|_{\max} \leq R^2$.
By Equation~(\ref{eq:gauss1}) and (\ref{eq:gauss2}), 
we have:
\begin{align*}
{\mathfrak{G}}_S(H) 
&\leq
\frac{1}{m}
\sqrt{
\mathop{\rm  E}\limits_{{\boldsymbol{\sigma}}}
\left[\sup_{\theta \in \Theta}  
\sum_{i,j=1}^m \sigma_i\sigma_j 
{\boldsymbol{\mathrm{K}}}_{ij}^{(\theta)}
\right]} \\
&\leq 
\frac{R \sqrt{(\sqrt{2}-1) + 2\ln 
|\Theta|
}}{\sqrt{m}}.
\end{align*}
\end{proof}

Thus, it suffices to bound the size $|\Theta|$. 
The basic idea to get our bound is the following geometric analysis.
Fix any $i \in [m]$ and consider points $\{\Phi(x) \mid x \in B_i\}$.
Then, we define equivalence classes of ${\bf u}$ such that
$\theta_{\bf u}(i)$ is in the same class, which define a Voronoi diagram
for the points $\{\Phi(x) \mid x \in B_i\}$.
Note here that the similarity is measured by the inner product, not a
distance. More precisely, 
let $\{V_i(x) \mid x \in B_i \}$ be the Voronoi diagram,
each of the region is defined as 
$V_i(x) = \{{\bf u} \in {\mathbb{H}} \mid \theta_{{\bf u}}(B_i)=x\}$
Let us consider the set of intersections
$\bigcap_{i \in [m]}V_i{(x_i)}$ for all combinations of
$(x_1, \ldots, x_m) \in B_1 \times \cdots \times B_m $.
The key observation is that each non-empty intersection corresponds to a
mapping $\theta_{{\bf u}} \in \Theta$. Thus,
we obtain $|\Theta|=(\text{the number of intersections $\bigcap_{i\in
[m]}V_{i}(x_i)$})$. In other words, the size of $\Theta$ is exactly
the number of rooms defined by the intersections of $m$ Voronoi
 diagrams $V_1,\dots,V_m$.
 From now on, we will derive upper bound based on this observation.

 \begin{lemm}
  \label{lemm:Theta}
 \[
  |\Theta| =O(|P_S|^{2d_{\Phi,S}^*}).
 \]
 \end{lemm}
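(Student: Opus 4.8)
The plan is to bound $|\Theta|$ by the number of sign patterns that a shapelet ${\bf u}$ can induce on the finite point set ${\Phi_{\mathrm{diff}}(P_S)}$, and then to count these sign patterns by the Sauer--Shelah lemma in terms of the VC dimension $d^*_{\Phi,S}$. First I would observe that the mapping $\theta_{\bf u}$ is completely determined by which instance attains the maximum inner product in each bag, and that for $x, x' \in B_i$ the comparison of $\langle {\bf u}, \Phi(x)\rangle$ with $\langle {\bf u}, \Phi(x')\rangle$ is governed by the sign of $\langle {\bf u}, \Phi(x) - \Phi(x')\rangle$. Every such difference vector $\Phi(x)-\Phi(x')$ with $x \ne x'$ belongs to ${\Phi_{\mathrm{diff}}(P_S)}$, so if two shapelets ${\bf u},{\bf u}'$ agree in sign on every element of ${\Phi_{\mathrm{diff}}(P_S)}$, they induce the same ordering of $\{\langle {\bf u}, \Phi(x)\rangle\}_{x \in B_i}$ within each bag, hence the same argmax and the same mapping. (Ties on a hyperplane $\langle {\bf u}, {{\bf v}}\rangle = 0$ are resolved by fixing a tie-breaking rule once and for all, so that $\theta_{\bf u}$ is well defined.) Consequently $|\Theta|$ is at most the number of distinct sign patterns $\{({\mathrm{sign}}\langle {\bf u}, {{\bf v}}\rangle)_{{{\bf v}} \in {\Phi_{\mathrm{diff}}(P_S)}} \mid {\bf u} \in U\}$.

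This number is exactly the number of dichotomies realized by the class $F_U = \{ {{\bf v}} \mapsto {\mathrm{sign}}(\langle {\bf u}, {{\bf v}}\rangle) \mid {\bf u} \in U\}$ on the point set ${\Phi_{\mathrm{diff}}(P_S)}$. Writing $N = |{\Phi_{\mathrm{diff}}(P_S)}|$ and recalling that $F_U$ has VC dimension $d^*_{\Phi,S}$ over this set, the Sauer--Shelah lemma gives
\[
|\Theta| \;\le\; \sum_{i=0}^{d^*_{\Phi,S}} \binom{N}{i} \;=\; O\!\left(N^{d^*_{\Phi,S}}\right).
\]
Finally, since ${\Phi_{\mathrm{diff}}(P_S)}$ consists of differences of distinct pairs from $P_S$, we have $N \le |P_S|(|P_S|-1) < |P_S|^2$, and substituting yields $|\Theta| = O((|P_S|^2)^{d^*_{\Phi,S}}) = O(|P_S|^{2 d^*_{\Phi,S}})$, as claimed.

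The routine parts are the Sauer--Shelah estimate and the cardinality bound $N < |P_S|^2$. I expect the main obstacle to be the first step: making precise that the combinatorial type of the Voronoi-like partition (equivalently, the argmax in every bag) is genuinely a function of the sign pattern on ${\Phi_{\mathrm{diff}}(P_S)}$ alone. This requires noting that only within-bag comparisons matter, that all relevant comparison vectors live in ${\Phi_{\mathrm{diff}}(P_S)}$, and that boundary and tie configurations do not create extra mappings beyond those counted by the growth function — which is why the reduction from ``number of cells of the $m$ intersected Voronoi diagrams $V_1,\dots,V_m$'' to ``number of sign vectors on ${\Phi_{\mathrm{diff}}(P_S)}$'', together with a consistent tie-breaking convention, must be stated carefully.
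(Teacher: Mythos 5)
Your proposal is correct and follows essentially the same route as the paper: both arguments reduce $|\Theta|$ to the number of sign patterns (dichotomies) realized by $F_U$ on the difference set $\Phi_{\mathrm{diff}}(P_S)$, bound that set's size by $|P_S|^2$, and apply Sauer's lemma with VC dimension $d^*_{\Phi,S}$; the paper merely phrases the first step in the language of Voronoi cells and point--hyperplane duality, whereas you argue directly that the within-bag argmax is a function of the sign vector. Your explicit attention to tie-breaking is a minor refinement over the paper's presentation, not a different method.
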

\begin{proof}
 We will reduce the problem of counting intersections of the Voronoi
 diagrams to that of counting possible labelings by hyperplanes for
 some set.
 Note that for each neighboring Voronoi regions, the border is a part of
 hyperplane since the closeness is defined in terms of the inner
 product.
 Therefore, by simply extending each border to a hyperplane, we obtain
 intersections of halfspaces defined by the extended hyperplanes.
 Note that, the size of these intersections gives an upper bound of
 intersections of the Voronoi diagrams.
 More precisely, we draw hyperplanes for each pair of points in
 $\Phi(P_S)$
 so that each point on
 the hyperplane has the same inner product between two points.
 Note that for each pair $\Phi(z), \Phi(z') \in P_S$, the normal vector of
 the hyperplane is given as $\Phi(z) -\Phi(z')$ (by fixing the sign arbitrary).
 Thus, the set of hyperplanes obtained by this procedure is exactly
 ${\Phi_{\mathrm{diff}}(P_S)}$. The size of ${\Phi_{\mathrm{diff}}(P_S)}$ is ${|P_S|} \choose 2$, which is at most $|P_S|^2$.
 Now, we consider a ``dual'' space by viewing each hyperplane as a point and each point in $U$ as a
 hyperplane.
 Note that points ${\bf u}$ (hyperplanes in the dual) in an intersection 
 give the same labeling on the points in the dual domain.
 Therefore, the number of intersections in the original domain is the
 same as the number of the possible labelings on ${\Phi_{\mathrm{diff}}(P_S)}$ by hyperplanes
 in $U$. By the classical Sauer's
 Lemma and the VC dimension of hyperplanes~\citep[see, e.g., Theorem 5.5
 in][]{LK02}), the size is at most $O((|P_S|^2)^{d_{\Phi,S}^*})$.
\end{proof}


  \begin{theo}\noindent \mbox\\
   \label{theo:Theta} 
  \begin{enumerate}
   \item[(i)] For any $\Phi$, $|\Theta|=O(|P_S|^{8(R/\mu^*)^2})$.
   \item[(ii)] if ${\mathcal{X}} \subseteq \mathbb{R}^\ell$ and $\Phi$ is the identity mapping over $P_S$, then
	      $|\Theta|=O( |P_S|^{\min\{8(R/\mu^*)^2, 2\ell \}}\})$.
   \item[(iii)] if ${\mathcal{X}} \subseteq \mathbb{R}^\ell$ and $\Phi$ satisfies that $\left\langle\Phi(z), \Phi(x) \right\rangle$ is monotone
decreasing with respect to $\|z-x\|_2$  (e.g.,
	the mapping defined by the Gaussian kernel) and
 $U=\{\Phi(z) \mid z \in {\mathcal{X}} \subseteq \mathbb{R}^\ell, \|\Phi(z)\|_{\mathbb{H}} \leq 1\}$, 
then $|\Theta|=O( |P_S|^{\min\{8(R/\mu^*)^2, 2\ell \}}\})$.
  \end{enumerate}
  \end{theo}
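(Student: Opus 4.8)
The plan is to derive all three bounds from Lemma~\ref{lemm:Theta}, which already gives $|\Theta| = O(|P_S|^{2 d^*_{\Phi,S}})$; it therefore suffices to bound the VC dimension $d^*_{\Phi,S}$ of the homogeneous linear classifiers $F_U$ over the finite point set $\Phi_{\mathrm{diff}}(P_S)$ in each regime and then substitute.

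For part (i) I would invoke the classical margin-based bound on the VC dimension of canonical (homogeneous) hyperplanes [e.g., Theorem 5.5 of \citep{LK02}]: if a point set lies in a ball of radius $R'$ and every realizable dichotomy can be attained with geometric margin at least $\mu$, then at most $(R'/\mu)^2$ points can be shattered. Here each $\mathbf{v} = \Phi(z) - \Phi(z') \in \Phi_{\mathrm{diff}}(P_S)$ satisfies $\|\mathbf{v}\|_{\mathbb{H}} \le \|\Phi(z)\|_{\mathbb{H}} + \|\Phi(z')\|_{\mathbb{H}} \le 2R$, so $R' = 2R$. The role of $\mu$ is played by $\mu^*$: I would first argue that $\mu^*$ lower-bounds the margin with which any dichotomy of $\Phi_{\mathrm{diff}}(P_S)$ induced by some $\mathbf{u} \in U$ can be realized. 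Each realizable full sign pattern corresponds to a cell $I \in \mathcal{I}$, and by definition of $\mu^*$ there is a $\mathbf{u} \in I \cap U$ whose inner product with every facet-defining $\mathbf{v} \in V_I$ has absolute value at least $\mu^*$; since the nearest active constraint of an interior point is always a facet, this $\mathbf{u}$ in fact achieves margin at least $\mu^*$ against all of $\Phi_{\mathrm{diff}}(P_S)$. Any labeling of a shattered subset extends to some cell, so the shattered subset is separated with margin at least $\mu^*$. Plugging $R' = 2R$ and $\mu = \mu^*$ yields $d^*_{\Phi,S} \le (2R/\mu^*)^2 = 4(R/\mu^*)^2$, and Lemma~\ref{lemm:Theta} gives $|\Theta| = O(|P_S|^{8(R/\mu^*)^2})$. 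For part (ii), when $\Phi$ is the identity, $\Phi_{\mathrm{diff}}(P_S) \subseteq \mathbb{R}^\ell$, so $F_U$ consists of homogeneous linear classifiers in $\mathbb{R}^\ell$, whose VC dimension is at most $\ell$; combining with part (i) gives $d^*_{\Phi,S} \le \min\{4(R/\mu^*)^2, \ell\}$ and hence the stated bound.

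Part (iii) is where the real work lies, and I expect it to be the main obstacle, since $\mathbb{H}$ may be infinite-dimensional (e.g., the Gaussian kernel) and the naive dimension count of part (ii) no longer applies. The key idea is to use the monotonicity assumption to linearize the classifier in the preimage $\mathbb{R}^\ell$. Writing $\langle \Phi(z), \Phi(x)\rangle = g(\|z - x\|_2)$ with $g$ strictly decreasing, for $\mathbf{v} = \Phi(z') - \Phi(z'')$ and $\mathbf{u} = \Phi(z)$ we have
\[
\langle \Phi(z), \Phi(z') - \Phi(z'')\rangle = g(\|z - z'\|_2) - g(\|z - z''\|_2),
\]
whose sign, by monotonicity, equals the sign of $\|z - z''\|_2^2 - \|z - z'\|_2^2 = 2\langle z, z' - z''\rangle + \|z''\|_2^2 - \|z'\|_2^2$. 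Thus $f_{\Phi(z)}(\mathbf{v}) = \mathrm{sign}(\langle z, 2(z' - z'')\rangle + (\|z''\|_2^2 - \|z'\|_2^2))$ is an affine function of the parameter $z \in \mathbb{R}^\ell$. Consequently, shattering $k$ points of $\Phi_{\mathrm{diff}}(P_S)$ amounts to realizing all $2^k$ sign patterns of $k$ affine hyperplanes in $z$-space $\mathbb{R}^\ell$; since an arrangement of $k$ hyperplanes in $\mathbb{R}^\ell$ has at most $\sum_{i=0}^\ell \binom{k}{i}$ cells, which is strictly less than $2^k$ once $k > \ell$, we obtain $d^*_{\Phi,S} \le \ell$. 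Together with part (i) this gives $d^*_{\Phi,S} \le \min\{4(R/\mu^*)^2, \ell\}$ and the claimed bound on $|\Theta|$.

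The points I would be most careful about are (a) justifying rigorously in part (i) that $\mu^*$, defined only through the facet constraints $V_I$, genuinely lower-bounds the shattering margin against \emph{all} of $\Phi_{\mathrm{diff}}(P_S)$, and (b) in part (iii), confirming that restricting the index set to $U = \{\Phi(z) : \|\Phi(z)\|_{\mathbb{H}} \le 1\}$ rather than all $z \in \mathbb{R}^\ell$ cannot increase the VC dimension (shrinking the parameter space only removes attainable sign patterns), so the affine-hyperplane count still yields the bound $\ell$.
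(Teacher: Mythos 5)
Your proposal follows essentially the same route as the paper: bound $d^*_{\Phi,S}$ by the margin-based VC bound for canonical hyperplanes in case (i), by the ambient dimension $\ell$ in case (ii), and by reducing the monotone-kernel case (iii) to the Euclidean/linear setting, then plug into Lemma~\ref{lemm:Theta}. Your explicit linearization in (iii) (the sign of $\langle\Phi(z),\Phi(z')-\Phi(z'')\rangle$ as an affine function of $z$) is just a more carefully written version of the paper's Voronoi-correspondence argument, and the subtlety you flag in (i) about $\mu^*$ being defined only via the facet constraints $V_I$ is present, unaddressed, in the paper's own proof as well.
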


\begin{proof}
 (i)
 We follow the argument in Lemma 4.
 For the set of classifiers $F=\{f:{\Phi_{\mathrm{diff}}(P_S)} \to \{-1,1\} \mid
 f={\mathrm{sign}}(\langle {\bf u}, {{\bf v}}\rangle), \|{\bf u}\|_{\mathbb{H}} \leq 1, 
 \min_{{{\bf v}} \in {\Phi_{\mathrm{diff}}(P_S)}}| \langle {\bf u}, {{\bf v}} \rangle|=\mu \}$,
 its VC dimension is known to be at most $R^2/\mu^2$
 for ${\Phi_{\mathrm{diff}}(P_S)} \subseteq \{{{\bf v}} \mid \|{{\bf v}}\|_{\mathbb{H}} \leq
 2R\}$~\citep[see, e.g., ][]{LK02}).
 By the definition of $\mu^*$,
 for each intersection given by hyperplanes, there always exists a point ${\bf u}$
 whose inner product between each hyperplane is at least $\mu^*$.
 Therefore, the size of the intersections is bounded by the number of possible
 labelings in the dual space by $U''=\{{\bf u} \in {\mathbb{H}} \mid
 \|{\bf u}\|_{\mathbb{H}} \leq 1,  \min_{{{\bf v}} \in {\Phi_{\mathrm{diff}}(P_S)}}| \langle {\bf u},
 {{\bf v}} \rangle|= \mu^* \}$.
 Thus, we obtain that $d_{\Phi, S}^*$ is at most $8(R / \mu^*)^2$ and by
 Lemma 4, we complete the proof of case (i).

 (ii) In this case, the Hilbert space ${\mathbb{H}}$ is contained in
 $\mathbb{R}^\ell$. Then, by the fact that VC dimension $d_{\Phi,S}^*$ is at
 most $\ell$ and Lemma 4, the statement holds.

 (iii)
If $\left\langle\Phi(z), \Phi(x) \right\rangle$ is monotone
decreasing for $\|z-x\|$,
then the following holds:
\[
\arg\max_{x \in {\mathcal{X}}} \left\langle\Phi(z), \Phi(x) \right\rangle = \arg\min_{x \in {\mathcal{X}}} \|z-x\|_2.
\]
Therefore, $\max_{{\bf u}:\|{\bf u}\|_{\mathbb{H}} = 1}\langle {\bf u}, \Phi(x)
\rangle = \|\Phi(x)\|_{\mathbb{H}}$, where ${\bf u} =
\frac{\Phi(x)}{\|\Phi(x)\|_{\mathbb{H}}}$. 
It indicates that the number of Voronoi cells
made by $V(x)=\{z \in \mathbb{R}^\ell \mid z=\arg\max_{x \in B}
  (z \cdot x) \}$ corresponds to the
$\hat{V}(x)=\{\Phi(z) \in {\mathbb{H}} \mid z=\arg\max_{x \in B}
 \langle \Phi(z), \Phi(x) \rangle \}$.
 Then, by following the same argument for the linear kernel case, we get
 the same statement.
\end{proof}
Now we are ready to prove Theorem~\ref{theo:main}.
\begin{proof}[Proof of Theorem~\ref{theo:main}]
By using Lemma~\ref{lemm:RC_and_GC}, and
 \ref{lemm:ensemble_margin_bound},
we obtain the generalization bound in terms of the Gaussian
 complexity of $H$. Then, by applying Lemma~\ref{lemm:GC} and Theorem~\ref{theo:Theta},
we complete the proof.
\end{proof}

\subsection{Hyper-parameter tuning for time-series classification}
\label{sec:exp_app2}.
In the experiment for time series classification, 
we roughly tuned $\nu$ and $\sigma^2$ of the Gaussian kernel.
As we mentioned before, we need high computation time when learning 
very large time series.
The main computational cost is to iteratively 
solve weak learning problems by using an LP (or QP) solver.
The number of constraints of the optimization problem (\ref{align:subprob})
depends on the total number of instances in negative bags.
Therefore, in the hyper-parameter tuning phase,
we finish solving each weak learning problem
by obtaining the solution of the optimization problem
(\ref{align:shape_opt}).
Using the rough weak learning problem, 
we tuned $\nu$ and $\sigma$ through a grid search
via three runs of $3$-fold cross validation.

\end{document}